\algrenewcommand\algorithmicindent{0.5em}
\algnewcommand\algorithmicinput{\textbf{Input:}}
\algnewcommand\INPUT{\item[\algorithmicinput]}
\tikzstyle{block} = [rectangle, rounded corners, minimum width=3cm, minimum height=1cm,text centered, draw=black, fill=red!30]
\tikzstyle{new} = [rectangle, rounded corners, minimum width=1cm, minimum
\tikzstyle{arrow} = [thick,->,>=stealth]
\tikzstyle{fblock} = [rectangle, draw, fill=gray!20, 
\tikzstyle{line} = [draw, -latex']
\DeclareFontFamily{OT1}{pzc}{}
\DeclareFontShape{OT1}{pzc}{m}{it}{<-> s * [1.200] pzcmi7t}{}
\DeclareMathAlphabet{\mathpzc}{OT1}{pzc}{m}{it}
  \DeclareMathAlphabet\PazoBB{U}{fplmbb}{m}{n}%
\theoremstyle{plain}
\newtheorem{theorem}{Theorem}
\newtheorem{lemma}{Lemma}
\newtheorem{proposition}{Proposition}
\theoremstyle{definition}
\newtheorem{assumption}{Assumption}
\newtheorem{problem}{Problem}
\newtheorem{remark}{Remark}
\DeclareMathOperator{\SOd}{SO}
\DeclareMathOperator*{\argmin}{argmin}
\DeclareMathOperator*{\argmax}{argmax}
\newcommand{\Rset}{\mathbb{R}}
\newcommand{\Mcal}{\mathcal{M}}
\newcommand{\Zcal}{\mathcal{Z}}
\newcommand{\Ncal}{\mathcal{N}}
\newcommand{\Ycal}{\mathcal{Y}}
\newcommand{\Wcal}{\mathcal{W}}
\newcommand{\defremove}[1]{}
\newcommand{\xtrue}{x_\text{true}}
\newcommand{\Sigtrue}{\Sigma_\text{true}}
\newcommand{\Ptrue}{P_\text{true}}
\newcommand{\xmle}{x^\star}
\newcommand{\problems}{Problems~\ref{prob:joint}-\ref{prob:joint_diagonal_constrained}}
\begin{document}

\title{Joint State and Noise Covariance Estimation}

\author{\authorblockN{Kasra Khosoussi$^\ast$\thanks{$^\ast$Also affiliated with 	
		CSIRO Robotics, Data61.}}
\authorblockA{School of Electrical Engineering and Computer Science\\
The University of Queensland\\
St Lucia, QLD, Australia\\
k.khosoussi@uq.edu.au}
\and
\authorblockN{Iman Shames}
\authorblockA{School of Engineering\\
The Australian National University\\
Canberra, ACT, Australia\\
iman.shames@anu.edu.au}}

\maketitle

\begin{abstract}
		This paper tackles the problem of jointly estimating the noise covariance matrix
		alongside states (parameters such as poses and points) from measurements
		corrupted by Gaussian noise and, if available, prior information. In such settings, the noise covariance matrix
		determines the weights assigned to individual measurements in the least squares
		problem. We show that the joint problem exhibits a convex structure and provide a
		full characterization of the optimal noise covariance estimate (with analytical
		solutions) within joint maximum \emph{a posteriori} and likelihood
		frameworks and several variants.
		Leveraging this theoretical result, we propose two novel algorithms that jointly
		estimate the primary parameters and the noise covariance matrix. 
		Our BCD algorithm can be easily integrated into existing nonlinear least
		squares solvers, with negligible per-iteration computational overhead.
		To validate our
		approach, we conduct extensive experiments across diverse scenarios and offer
		practical insights into their application in robotics and computer vision
		estimation problems with a particular focus on SLAM.
\end{abstract}

\IEEEpeerreviewmaketitle

\section{Introduction}
\label{sec:introduction}
Maximum likelihood (ML) and maximum \emph{a
posteriori} (MAP) are the two most common
point-estimation criteria in robotics and computer vision applications such as
all variants of simultaneous localization and mapping (SLAM) and bundle adjustment.
Under the standard assumption of zero-mean Gaussian measurement noise---and,
for MAP, Gaussian priors---these estimation problems
reduce to least squares; i.e., 
finding estimates that minimize the weighted sum of squared errors between
observed and expected measurements (and, when priors are available, the weighted squared
errors between parameter values and their expected priors).

Each measurement error in the least squares objective is weighted by its corresponding noise information
matrix (i.e., the inverse of the covariance matrix). Intuitively, 
more precise sensors receive ``larger'' weights, thus exerting greater
influence on the final estimate. Moreover, the weight {matrices} enable the estimator to account for correlations
between different components of measurements, preventing ``double counting''
of information.
Therefore, obtaining an accurate estimate of the noise covariance matrix is
critical for achieving high estimation accuracy.
In addition, the (estimated) noise covariance matrix also determines the (estimated) covariance of the 
estimated parameter values often used for control and decision making in robotics. 
Consequently, an inaccurate noise covariance estimate can cause 
overconfidence or underconfidence in state estimates, potentially leading to
poor or even catastrophic decisions.

In principle, the noise covariance matrix can be estimated \emph{a priori}
(offline) using a calibration dataset where the true values of the primary
parameters (e.g., robot poses) are known (see Remark~\ref{rem:calib}).
This can be done either by the sensor manufacturer or the end user.
However, in practice, several challenges arise: 
\begin{enumerate}[leftmargin=*]
		\item Calibration is a
				labor-intensive process and may not always be feasible, particularly when
				obtaining ground truth for primary parameters requires additional
				instrumentation.
		\item  In many cases, raw measurements are preprocessed by intermediate
				algorithms before being used in the estimation problem (e.g., in
				a SLAM front-end), making it difficult to model their noise
				characteristics. 
		\item The noise characteristics may evolve over time (e.g., due to
				dynamic environmental factors such as temperature), making the
				pre-calibrated noise model obsolete.  
\end{enumerate}
Due to these challenges, many applications 
rely on ad hoc noise covariances, such as arbitrary isotropic (or
diagonal) covariances, which are either manually set by experts or determined
through trial-and-error tuning.
Despite being recognized as one of the most critical and widely
acknowledged challenges in SLAM \cite[Sections III.B,
III.G, and V]{ebadi2023present}, the problem of noise covariance estimation remains unsolved and
understudied in
robotics and computer vision literature.

We present, to the best of our knowledge, the first algorithms for online (i.e.,
during deployment) \emph{joint} ML/MAP estimation of states and noise covariance
matrices from noisy measurements and, when available, prior information. Our
approach is general and eliminates the need for a separate calibration stage
across a broad class of estimation problems beyond SLAM. We analyze the
convergence properties of the proposed algorithm and demonstrate that it can be
seamlessly integrated into existing sparse nonlinear least squares
solvers~\cite{Agarwal_Ceres_Solver_2022,gtsam,kuemmerle2011g2o}, with negligible
computational overhead.

\subsection*{Notation}
We use $[n]$ to denote the set of integers from $1$ to $n$. The abbreviated
notation $x_{1:n}$ is used to denote $x_1, \ldots, x_n$.
The zero matrix (and vector) is denote by $0$ where the size should be clear
from the context.
$\mathbb{S}^d_{\succeq 0}$ and $\mathbb{S}^d_{\succ 0}$ denote the sets of $d \times d$
symmetric positive semidefinite and positive definite real matrices,
respectively. For two symmetric real matrices $A$ and $B$, $A \succeq B$
(resp., $A \succ B$) means $A-B$ is positive semidefinite (resp.,
positive definite). $A_{ij}$ denotes the $(i,j)$ element of matrix $A$, and
$\mathrm{Diag}(A)$ denotes the diagonal matrix obtained by zeroing out the
off-diagonal elements of $A$. The standard (Frobenius) inner product between
$n \times n$ real matrices $A$ and $B$ is denoted by $\langle A,B \rangle \triangleq \mathrm{trace}(A^\top
B)$. The Frobenius norm of $A$ is denoted by $\|A\| = \sqrt{\langle
A,A\rangle}$. 
The weighted Euclidean norm of $x$ given a weight matrix $W \succ 0$ is
denoted by $\|x\|_W \triangleq \sqrt{x^\top W x}$. The probability density
function of the multivariate normal
distribution of random variable $x$ with mean vector $\mu$ and covariance matrix $\Sigma$ is denoted by
$\Ncal(x; \mu,\Sigma)$.

\section{Related Works}
\label{sec:related}
We refer the reader to
\cite{cadena2016simultaneous,dellaert2017factor,ebadi2023present,triggs1999bundle}
for comprehensive reviews of state-of-the-art estimation frameworks in robotics
and computer vision. Sparse nonlinear least squares solvers for
solving these estimation problems can
be found in \cite{Agarwal_Ceres_Solver_2022,gtsam,kuemmerle2011g2o}.
However, all of these works assume that the noise covariance is known
beforehand. In contrast, our approach simultaneously estimates both the primary
parameters (e.g., robot's trajectory) and the noise covariance matrix
directly from noisy measurements.
The importance of automatic hyperparameter tuning has recently gained
recognition in the SLAM literature; see, e.g., \cite[Section V]{ebadi2023present}
and \cite{fontan2024look,fontan2024anyfeature}. We share this perspective and
present, to the best of our knowledge, the first principled
approach for ML/MAP measurement covariance estimation in SLAM and related
problems.

\subsection{Optimal Covariance Estimation via Convex Optimization}
ML estimation of the mean and covariance from independent and identically
distributed (i.i.d.) Gaussian samples using the sample mean and sample
covariance is a classic example found in textbooks.
\citet[Chapter 7.1.1]{boyd2004convex} show that covariance
estimation in this standard setting and several of its variants can be formulated as convex optimization
problems.
However, many estimation problems that arise in robotics and other engineering disciplines extend
beyond the standard setting.
While \emph{noise samples} are assumed to be i.i.d.\ for each
measurement \emph{type} (Section~\ref{rem:multiple}), the measurements
themselves are \emph{not} identically distributed. Each measurement follows a Gaussian
distribution with the corresponding noise covariance matrix and a \emph{unique} mean that depends on an
unknown parameter belonging to a manifold.
Furthermore, the measurement function varies across different measurements and
is often nonlinear. 
We demonstrate that the noise covariance estimation problem in this
more general setting can also be formulated as a convex optimization problem. Similar to
\cite{boyd2004convex}, we explore several problem variants that incorporate
prior information and additional structural constraints on the noise covariance
matrix. These variants differ from those studied in
\cite[Chapter 7.1.1]{boyd2004convex} and admit analytical (closed-form)
solutions.

\subsection{Covariance Estimation in Robotics and Computer Vision}
\citet{zhan2025generalized} propose a joint pose and noise
covariance estimation method for the perspective-$n$-point (P$n$P) problem in
computer vision. Their approach is based on the iterated (or iterative)
generalized least squares (IGLS) method (see \cite[Chapter
12.5]{SeberWild200309} and references therein), alternating between pose
and noise covariance estimation. They report improvements in
estimation accuracy
ranging from $2\%$ to $34\%$ compared to baseline methods that assume a fixed
isotropic noise covariance.
To the best of our knowledge, before \cite{zhan2025generalized}, IGLS had not been
applied in robotics or computer vision.\footnote{We were
unable to find the original reference for IGLS. However, the concept was already known
and analyzed in the econometrics literature in 1970s
\cite{malinvaud1980statistical}. IGLS is closely
related to the feasible generalized least squares (FGLS) method which also has a
long history in econometrics and regression.}
Our work (developed concurrently with
\cite{zhan2025generalized}) generalizes and extends both IGLS and 
\cite{zhan2025generalized} in several keys ways.
First, we prove that the joint ML estimation problem is ill-posed when the sample covariance
matrix is singular. This critical problem arises frequently in
real-world applications, where the sample covariance matrix can be singular or
poorly conditioned (Remark~\ref{rem:singular}). We address this critical issue by
(i) constraining the minimum eigenvalue of the noise covariance matrix, and, in our MAP formulation, (ii) imposing a Wishart prior on the noise information
matrix. We derive analytical optimal solutions for the noise
covariance matrix, conditioned on fixed values of the primary parameters, for MAP
and ML joint estimation problems and several of their constrained variants
(Theorem~\ref{thm:main}). These enable the end user to leverage prior information
about the noise covariance matrix (from, e.g., manufacturer's calibration) in
addition to noisy measurements to solve the joint estimation problem. We propose several algorithms for
solving these joint estimation problems and present a rigorous theoretical
analysis of their convergence properties.  Our formulation is more general than
IGLS and we show how our framework can be extended to heteroscedastic
measurements, nonlinear (with respect to) noise models, and manifold-valued
parameters.  Finally, we provide insights into the application of our approach
to ``graph-structured'' estimation problems \cite{khosoussi2019reliable}, such
as PGO and other SLAM variants. These problems present
additional challenges compared to the P$n$P problem due to the increasing number
of primary parameters (thousands of poses in PGO vs.\ a single pose in P$n$P) and
the sparsity of measurements, which reduce the effective signal-to-noise ratio.

\citet{barfoot2020exactly} and \citet{wong2020variational} propose an
EM-type method for learning the noise covariance matrix as part of a 
variational inference framework. Similar to our work, they used an
Inverse-Wishart prior on the covariance matrix.
However, we estimate the covariance matrix by solving the joint MAP/ML
estimation problems and their constrained variants. While the covariance
estimation formulation and proposed techniques
share similarities, our method focuses on widely used MAP/ML \emph{point estimation} rather than
obtaining an analytical approximation of the entire posterior, yielding
significantly faster solutions (by up to several orders of magnitude based on the 
statistics reported in \cite{barfoot2020exactly}). As a result, unlike
\cite{barfoot2020exactly,wong2020variational}, our
method can be readily integrated into existing nonlinear least squares solvers such as
\cite{Agarwal_Ceres_Solver_2022,gtsam,kuemmerle2011g2o} in both online
and offline settings with a negligible computational overhead.

\citet{barfoot2024state} proposes an algorithm for joint state and covariance MAP
estimation (without our structural constraints) in which each measurement is
assigned a unique covariance. In \cite[Chapter 5.5.3]{barfoot2024state}, the conditionally optimal
covariances are derived and eliminated analytically from the optimization
problem. The resulting reduced objective resembles the Cauchy robust cost, which
is minimized via iteratively reweighted least squares. Our Elimination algorithm
(Algorithm~\ref{alg:elimination}) extends this approach to the more general
setting where arbitrary subsets of measurements can share a covariance matrix.
In this case, the reduced objective includes log-determinant terms and no longer
corresponds to the Cauchy cost. We therefore directly optimize the reduced objective using the
limited-memory BFGS method. We also propose a second algorithm, Block Coordinate
Descent (Algorithm~\ref{alg:hybrid-BCD}), which is preferable to Elimination as
it offers faster convergence, exploits problem sparsity, and leverages existing
sparse nonlinear least squares solvers.

\citet{lu2022slam} introduce a covariance autotuning method for object
measurements in SLAM, employing an alternating optimization scheme over states
and the variances of a \emph{diagonal} covariance matrix. Despite
similarities between \cite{lu2022slam} and our BCD algorithm, the cost
function in \cite{lu2022slam} differs from standard MAP/ML
formulation. Additionally, unlike \cite{lu2022slam}, our approach does not assume that the
covariance matrix is diagonal.

\citet{qadri2024learning} propose a bilevel optimization framework to learn
measurement covariances from a calibration dataset with known ground
truth. Specifically, they seek the covariance estimate (outer problem)
that minimizes the state estimation (inner problem) error.
Unlike \cite{qadri2024learning}, our method does not directly minimize
the estimation error and thus does not require access to the ground
truth. Instead, we jointly estimate both states
and covariances directly from observed data (and, optionally, prior information
on the covariance) in a joint MAP/ML framework. As a result, our method
can be used during deployment to learn measurement covariances based on
the collected measurements.

\subsection{Noise Covariance Estimation in Kalman Filtering}
The problem of identifying process and measurement noise models in Kalman
filtering (often referred to as \emph{adaptive} Kalman filtering) has been
extensively studied since the late 1960s; see, e.g.,
\cite{abramson1970simultaneous,mehra1972approaches,zhang2020identification,chen2023kalman,forsling2024matrix}
and references therein. While our work shares certain similarities with these
approaches and their underlying principles, such methods are specifically
designed for (approximate, when models are nonlinear) {recursive} MAP estimation in
{linear(ized)} models within the {filtering} setting. As a result,
they are not readily applicable to batch and {smoothing} formulations, nonlinear
measurement models, sparse large-scale problems, or (nonlinear)
manifold-valued states. These features are essential for addressing many
estimation problems in robotics and computer vision (see, e.g., state-of-the-art
estimation frameworks for SLAM \cite{dellaert2017factor}).

\begin{figure}[t]
		\centering
		\includegraphics[width=0.49\textwidth]{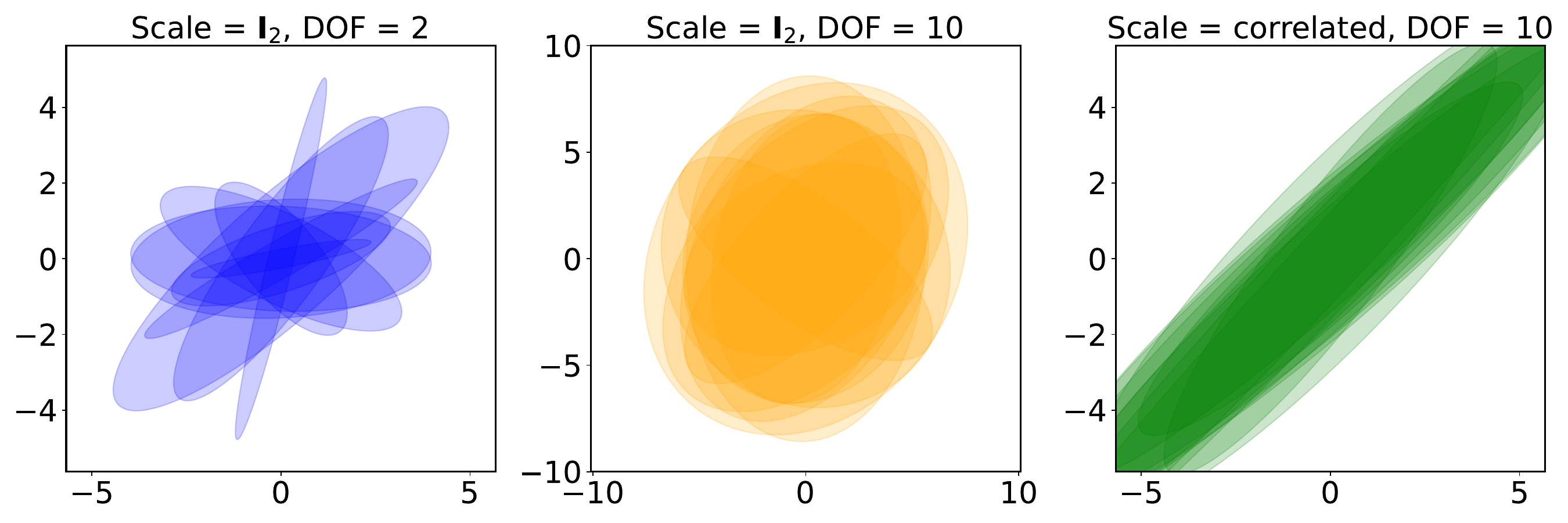}
		\caption{Confidence ellipses for 10 samples drawn from the Wishart distribution 
				$\mathcal{W}(P;V,\nu)$ with different parameters. The scale matrix $V$ is set 
				to identity in the left and middle plots, and to a correlated matrix in the 
				right plot. The degrees of freedom $\nu$ are set to $2$ (left) and $10$ (middle and right). 
				As $\nu$ increases, the samples become more concentrated. Increasing $\nu$ changes the scale of samples as
well (cf.\ left and middle).}
		\label{fig:wishart}
\end{figure}

\section{Problem Statement}
\label{sec:problem}
Consider the standard problem of estimating an unknown vector 
$\xtrue \in \Mcal$ given $k$ noisy
$m$-dimensional measurements $\Zcal \triangleq \{z_i\}_{i=1}^{k}$ corrupted by i.i.d.\ zero-mean
Gaussian noise:
\begin{align}
		z_i  = h_i(\xtrue) \boxplus \epsilon_i,\quad  \epsilon_i  \sim
		\mathcal{N}(0_m,\Sigtrue),
		\label{eq:measurement_model}
\end{align}
where $\Sigtrue$ is the unknown noise covariance.  For Euclidean-valued
measurements (such as relative position of a landmark with respect to a robot
pose), $\boxplus$ reduces to addition in $\Rset^m$.  For matrix Lie group-valued
measurements (such as relative pose or orientation between two poses),
$\boxplus$ is equivalent to multiplication by $\mathrm{Exp}(\epsilon_i)$ where
$\mathrm{Exp}$ denotes the matrix exponential composed with the
so-called hat operator.  We denote the \emph{residual} of
measurement $z_i$ evaluated at $x \in \Mcal$ with $r_i(x) \triangleq z_i
\boxminus h_i(x)$. As above, $\boxminus$ is subtraction in $\Rset^m$ for
Euclidean measurements, and $\mathrm{Log}(h_i(x)^{-1}z_i)$ in the case of matrix
Lie-group-valued measurements where $\mathrm{Log}$ is the matrix logarithm composed
with the so-called vee operator (in this case, $m$ refers to the dimension of Lie algebra).

In this paper, we refer to $\xtrue$ as the \emph{primary} ``parameters'' to distinguish them from
$\Sigtrue$.
To simplify the discussion, we first consider the
case where all measurements share the same noise distribution, meaning there is
a single noise covariance matrix $\Sigtrue$ in \eqref{eq:measurement_model}.
See Section~\ref{rem:multiple} for extensions to more general cases.
In robotics and computer vision applications, $\Mcal$
is typically a (smooth) product manifold comprised of
$\Rset^d$ and $\SOd(d)$ components ($d \in \{2,3\}$) and other 
real components (e.g., time offsets, IMU biases).
We assume the measurement functions $h_i : \Mcal \to \Rset^m$ are smooth.
This standard model (along with extensions in Section~\ref{sec:opt_cov_est}) is
quite general, capturing many estimation problems 
such as SLAM (with various sensing modalities and variants),
PGO, point cloud registration (with known correspondences),
perspective-$n$-point, and bundle adjustment.

In this paper, we are interested in the setting where the noise covariance
matrix $\Sigtrue \succ 0$ is 
unknown and must be estimated \emph{jointly} with $\xtrue$ based on the
collected measurements $\{z_i\}_{i=1}^{k}$. 
For convenience, we formulate the problem in the \emph{information form} and estimate the
noise information (or precision) matrix $P_\text{true} \triangleq \Sigtrue^{-1}$.
Without loss of generality, we assume a non-informative prior on $\xtrue$ which
is almost always the case in real-world applications
(effectively treating it as an unknown ``parameter'').
We assume a Wishart prior on the noise information matrix $\Ptrue$ and denote its
probability density function with $\Wcal(P;V,\nu)$ where $V \in \mathbb{S}^m_{\succ
0}$ is the scale matrix
and the integer $\nu \geq m+1$ is the number of degrees of freedom;
Figure~\ref{fig:wishart} illustrates random samples drawn from $\Wcal(P;V,\nu)$.
The Wishart distribution is the standard choice for prior in Bayesian
statistics for estimating the information matrix from
multivariate Gaussian data (in part due to conjugacy); see, e.g.,
\cite[Eq.\ (2.155)]{bishop2006pattern}.
In Algorithm~\ref{alg:mode}, we propose a procedure for setting the parameters of
the prior, $V$ and $\nu$,
when the user has access to a prior estimate $\Sigma_0$ for the noise covariance matrix $\Sigtrue$ (e.g., from prior calibration by
the manufacturer of the sensor); see Appendix~\ref{app:wishart} for a
justification.

\begin{algorithm}[t]
    \caption{Setting Wishart Parameters via Mode Matching}
    \label{alg:mode}
    \begin{algorithmic}[1] 
			\Procedure{PriorModeMatching}{$\Sigma_0$, $w_\text{prior}$, $k$}
			\State {\color{green!40!black} {// $\Sigma_0 \succ 0$ is a prior estimate
			for $\Sigtrue$}}
			\State {\color{green!40!black} {// $w_\text{prior} > 0$ is the weight assigned
			to prior relative to measurement likelihood}}
			\State {\color{green!40!black} {// $m$ is the dimension of
			$\Sigma_0$}}
			\State {\color{green!40!black} {// $k$ is the number of measurements}}
			\State $V \leftarrow (w_\text{prior} \,k\, \Sigma_0)^{-1}$
			\label{line:inversion}
			\State $\nu \leftarrow w_\text{prior} \,k + m + 1$
			\State \textbf{return} $(V,\nu)$
        \EndProcedure
    \end{algorithmic}
\end{algorithm}

The joint MAP estimator
for $\xtrue$ and $P_\text{true}$ are the maximizers of the posterior.
Invoking the Bayes' Theorem (and omitting the normalizing constant) results in the
following problem:
\begin{align}
		\underset{x \in \Mcal,P \succeq 0}{\text{maximize}} \quad
		\underbracket{\Wcal(P;V,\nu)}_{\text{\footnotesize prior on $\Ptrue$}}
		\overbracket{\prod_{i=1}^k
		\Ncal(z_i;h_i(x), P^{-1})}^{\text{\footnotesize measurement likelihood}}.
		\label{eq:mle_def}
\end{align}
For any $x \in \Mcal$, define the \emph{sample covariance at $x$} as
follows: 
\begin{align}
		S(x) \triangleq \frac{1}{k} \sum_{i=1}^k r_i(x) r_i(x)^\top \succeq 0.
		\label{eq:sample_cov}
\end{align}
Then, computing the negative log-posterior, omitting normalizing constants, dividing
the objective by $k+\nu -m -1$, and using the cyclic property of $\mathrm{trace}$ yields
the following equivalent problem.
\begin{problem}[Joint MAP]
		\begin{align}
				\underset{x \in \Mcal, P \succeq 0}{\text{minimize}}
				\,\, F(x,P) \triangleq - \log\det P + \left\langle
				M(x),P\right\rangle,
				\label{eq:negative_log_likelihood}
		\end{align}
		where
		\begin{align}
				M(x) \triangleq \frac{k S(x) + V^{-1}}{k+\nu-m-1} \succ 0.
				\label{eq:M}
		\end{align}
		\label{prob:joint}
\end{problem}
We also introduce and study three new variants of Problem~\ref{prob:joint} by
imposing additional (hard) constraints on the noise covariance matrix $\Sigma$.
These constraints enable the user to enforce prior
structural information about the covariance.

In the
first variant, $\Sigma$ (and thus $P$) is forced to be
diagonal. This allows the user to enforce independence between noise components.
\begin{problem}[Diagonal Joint MAP]
		\begin{align}
				\begin{aligned}
						& \underset{x \in \Mcal, P \succeq 0}{\text{minimize}}
						& & \,\, F(x,P)\\
						& \text{subject to}
						& & \text{$P$ is diagonal}.
				\end{aligned}
				\label{eq:negative_log_likelihood_diag}
		\end{align}
		\label{prob:joint_diagonal}
\end{problem}
In the second variant, 
we constrain the eigenvalues of $\Sigma = P^{-1}$ to
$[\lambda_\text{min},\lambda_\text{max}]$ where 
$\lambda_\text{max} \geq \lambda_\text{min} > 0$. 
These eigenvalues specify the minimum and maximum variance along all directions
in $\Rset^m$ (i.e., variance of all normalized linear combinations of noise
components), Therefore, 
this constraint allows the user to incorporate prior knowledge about the
sensor's noise limits. We will also show that in many real-world instances
(especially with a weak or no prior),
constraining the smallest eigenvalue of $\Sigma$ (or largest eigenvalue of $P$)
is essential for preventing $\Sigma$ from collapsing to zero.
\begin{problem}[Eigenvalue-constrained Joint MAP]
		\begin{align}
				\begin{aligned}
						& \underset{x \in \Mcal, P \succeq 0}{\text{minimize}}
						& & \,\, F(x,P)\\
						& \text{subject to}
						& & \lambda_\text{max}^{-1} I \preceq P
						\preceq \lambda_\text{min}^{-1} I.
				\end{aligned}
				\label{eq:negative_log_likelihood_eig}
		\end{align}
		\label{prob:joint_constrained}
\end{problem}
The last variant imposes both
constraints simultaneously.
\begin{problem}[Diagonal Eigenvalue-constrained Joint MAP]
		\begin{align}
				\begin{aligned}
						& \underset{x \in \Mcal, P \succeq 0}{\text{minimize}}
						& & \,\, F(x,P)\\
						& \text{subject to}
						 & & 
						 \lambda_\text{max}^{-1} I \preceq P
						 \preceq \lambda_\text{min}^{-1} I,\\
						 & & & \text{$P$ is diagonal}.
				\end{aligned}
				\label{eq:negative_log_likelihood_diag_eig}
		\end{align}
		\label{prob:joint_diagonal_constrained}
\end{problem}
\begin{remark}[Joint ML Estimation]
One can resort to joint ML estimation when a prior
distribution is not available.  The negative log-likelihood cost function
arising in the joint ML estimation problem and its constrained
variants (i.e., with eigenvalue and/or diagonal constraints) takes a form similar to $F(x, P)$ defined in
\eqref{eq:negative_log_likelihood}:
		\begin{align}
				F_\text{ML}(x,P) \triangleq - \log\det P + \langle S(x), 
				P \rangle,
				\label{eq:jointMLcost}
		\end{align}
		where $S(x)$ denotes the sample covariance matrix \eqref{eq:sample_cov}.
		The \emph{unconstrained} joint ML problem was also derived in
		\cite{zhan2025generalized}.
		\label{rem:jointMLRemark}
\end{remark}
\begin{remark}[Fixing Noise Covariance]
		It is worth noting that by fixing $P = P_0$, Problem~\ref{prob:joint} reduces to 
				 nonlinear least squares over $x$ and can
				be (locally) solved using existing solvers 
				\cite{Agarwal_Ceres_Solver_2022,gtsam,kuemmerle2011g2o}. To see
				this, note that the only term in the objective function 
				\eqref{eq:negative_log_likelihood} that is a function of $x$ can
				be written as:
				\begin{align}
						\langle M(x), P_0 \rangle &= {\gamma}^{-1}\langle
						k S(x), P_0 \rangle  + \text{const}\\
				& = {\gamma}^{-1} \left\langle \sum_{i=1}^k
				r_i(x) r_i(x)^\top , P_0 \right\rangle + \text{const} \\
				& = \gamma^{-1} \sum_{i=1}^{k} \|r_i(x)\|^2_{P_0} +
		\text{const},
						\label{<+label+>}
				\end{align}
				where $\gamma \triangleq k+ \nu -m -1$ is the constant that
				appears in the denominator of
				\eqref{eq:M}.
		\label{rem:fixedCov}
\end{remark}

\section{Optimal Information Matrix Estimation}
\label{sec:opt_cov_est}
Problems~\ref{prob:joint}-\ref{prob:joint_diagonal_constrained} are in general
\emph{non}-convex in $x$ because of the residuals $r_i$'s and, when estimating
rotations (e.g., in SLAM), the $\SOd(d)$ constraints imposed on rotational
components of $x$. In this section, we reveal a convexity structure in these
problems and provide analytical (globally) optimal solutions for estimating the
covariance matrix for a given $x \in \Mcal$. 
\subsection{Inner Subproblem: Covariance Estimation}
Problems~\ref{prob:joint}-\ref{prob:joint_diagonal_constrained} can be separated
into two nested subproblems: an inner subproblem and an outer subproblem, i.e.,
\begin{align}
		\begin{aligned}
				 \underset{x \in \Mcal}{\text{minimize}} & \qquad \underset{P
						 \succeq
				0}{\min}
				& & \,\, F(x,P)\\
				& \quad\quad \text{subject to}
				& & 
				\text{appropriate constraints on $P$},
		\end{aligned}
		\label{eq:joint_information}
\end{align}
where the constraints for each problem are given in \problems{}. The inner subproblem focuses on minimizing the
objective function over the information matrix $P$ and as a function of $x \in \Mcal$. The outer subproblem minimizes the overall objective
function by optimizing over $x \in \Mcal$ when the objective is evaluated at the optimal information matrix
obtained from the inner subproblem. 
\begin{remark}\label{rem:interior}
For \eqref{eq:joint_information} to be well defined, we require the constraint
set to be closed and the cost function to be bounded from below. As $M(x)$ is
positive definite by construction, the cost is bounded from below. The positive
semidefinite constraint guarantees the closedness of the constraint set.
However, due to the
presence of the $\log\det P$ term in the cost function, if a solution exists,
the cone constraints are not active at this solution. Thus, a singular $P^\star$
can never be a solution to this problem.
\end{remark}
\subsection{Analytical Solution to the Inner Subproblem}
The inner subproblem for a fixed $x$ can be written as
\begin{align}
		\begin{aligned}
				& \underset{P \succeq 0}{\text{minimize}}
				& & -\log\det P +  \langle M(x), P \rangle\\
				& \text{subject to}
				& & 
				\text{appropriate constraints on $P$}.
		\end{aligned}
		\label{eq:inner}
\end{align}
\begin{proposition}
		For any $x \in \Mcal$, the inner subproblem \eqref{eq:inner} is a convex optimization problem
		and has at most one optimal solution.
		\label{prop:convexity}
\end{proposition}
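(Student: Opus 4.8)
The plan is to verify the two defining properties of a convex program — convexity of the objective and convexity of the feasible set — and then to strengthen convexity of the objective to \emph{strict} convexity, which yields the ``at most one optimal solution'' claim for free.

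First I would view the objective $f(P) \triangleq -\log\det P + \langle M(x),P\rangle$ as an extended-real-valued function on $\mathbb{S}^m$ whose effective domain is $\mathbb{S}^m_{\succ 0}$ (with $f \equiv +\infty$ elsewhere). The term $\langle M(x),P\rangle$ is affine in $P$, so it suffices to recall that $-\log\det$ is strictly convex on $\mathbb{S}^m_{\succ 0}$. I would prove this by restriction to a line: for $P \succ 0$ and $V \in \mathbb{S}^m$ with $V \neq 0$, one has $\det(P+tV) = \det(P)\prod_{i=1}^m (1+t\lambda_i)$, where $\lambda_1,\dots,\lambda_m$ are the eigenvalues of $P^{-1/2}VP^{-1/2}$ and are not all zero; hence $g(t) \triangleq -\log\det(P+tV)$ satisfies $g''(t) = \sum_{i=1}^m \lambda_i^2/(1+t\lambda_i)^2 > 0$ on the open interval where $P+tV \succ 0$. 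Therefore $f$ is strictly convex on its domain.

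Next I would check that the feasible set $\mathcal{C}$ in each of Problems~\ref{prob:joint}--\ref{prob:joint_diagonal_constrained} is convex (and closed, consistent with Remark~\ref{rem:interior}): the cone $\mathbb{S}^m_{\succeq 0}$ is convex; the set of diagonal matrices is a linear subspace; the eigenvalue constraint $\lambda_\text{max}^{-1} I \preceq P \preceq \lambda_\text{min}^{-1} I$ defines the intersection of the two closed convex sets $\{P : P - \lambda_\text{max}^{-1} I \succeq 0\}$ and $\{P : \lambda_\text{min}^{-1} I - P \succeq 0\}$ (each a translate of $\pm\,\mathbb{S}^m_{\succeq 0}$); and the last variant intersects the eigenvalue box with the diagonal subspace. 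Since intersections of convex sets are convex, $\mathcal{C}$ is convex in every case, and together with convexity of $f$ this shows \eqref{eq:inner} is a convex optimization problem.

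Finally, for uniqueness: if $P_1,P_2 \in \mathcal{C}$ were both optimal with $P_1 \neq P_2$, then (since $f$ is finite at an optimum, or by Remark~\ref{rem:interior}) both lie in $\mathbb{S}^m_{\succ 0}$, so the midpoint $\bar P \triangleq \tfrac12(P_1+P_2)$ lies in $\mathbb{S}^m_{\succ 0} \cap \mathcal{C}$, and strict convexity of $f$ along $[P_1,P_2]$ gives $f(\bar P) < \tfrac12 f(P_1) + \tfrac12 f(P_2)$, contradicting optimality. I do not anticipate a real obstacle here; the only points needing slight care are that the strict-convexity computation is carried out where $f$ is finite (so the open segment between two feasible points stays in $\mathbb{S}^m_{\succ 0}$), and that the statement asserts only ``at most one'' minimizer, so no compactness/existence argument is required — existence is instead handled separately through the closed-form solutions of Theorem~\ref{thm:main}.
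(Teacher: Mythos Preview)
Your proposal is correct and follows essentially the same approach as the paper: both argue that $-\log\det$ is strictly concave (hence the objective is strictly convex), that the constraint sets are convex, and that strict convexity implies at most one minimizer. Your version simply fills in the standard restriction-to-a-line computation for strict convexity and spells out the midpoint contradiction for uniqueness, whereas the paper states these facts without detail.
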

\begin{proof}
		See Appendix~\ref{app:prop}.
\end{proof}

The following theorem provides analytical expressions for the unique optimal solutions
to the inner subproblems \eqref{eq:inner} in \problems{}.
\begin{theorem}[Analytical Solution to the Inner Problem]
		Consider the inner problem \eqref{eq:inner} for a given $x \in \Mcal$.
				The following statements hold:
				\begin{enumerate}[leftmargin=*]
				\item {Inner Subproblem in Problem~\ref{prob:joint}}: 
					The optimal solution is given by
						\begin{align}
								P^\star(x) = M(x)^{-1}.
								\label{eq:first_statement}
						\end{align}
				\item {Inner Subproblem in Problem~\ref{prob:joint_diagonal}}:
						The optimal solution is given by
						\begin{align}
								P^\star(x) = \mathrm{Diag}(M(x))^{-1}
								\label{eq:Pstar_diag}
						\end{align}
				\item {Inner Subproblem in Problem~\ref{prob:joint_constrained}}: 
						Let 
						\begin{equation}
								M(x) = U(x)D(x)U(x)^\top
						\end{equation}
						be an eigendecomposition
						of $M(x)$ where $U(x)$ and
						$D(x)$ are orthogonal and diagonal, respectively.
						The optimal solution is given by
						\begin{align}
								P^\star(x) = U(x) \Lambda(x) U(x)^\top
								\label{eq:Pstar_joint_constrained}
						\end{align}
						where $\Lambda(x)$ is a diagonal matrix with the following
						elements:
						\begin{align}
								\Lambda_{ii}(x) = 
								\begin{cases}
										\lambda_\text{\normalfont min}^{-1} & 
										D_{ii}(x) \in [0,
										\lambda_\text{\normalfont min}],
										\\
										D_{ii}(x)^{-1} & 
										D_{ii}(x) \in (\lambda_\text{\normalfont min},
										\lambda_\text{\normalfont max}),\\
										\lambda_\text{\normalfont max}^{-1} &
										D_{ii}(x) \in
										[\lambda_\text{\normalfont max},\infty).
								\end{cases}
						\end{align}
				\item {Inner Subproblem in Problem~\ref{prob:joint_diagonal_constrained}}: 
						The optimal solution is a diagonal matrix with the
						following elements:
						\begin{align}
								P^\star_{ii}(x) =
								\begin{cases}
										\lambda_\text{\normalfont min}^{-1} & 
										M_{ii}(x) \in [0,
										\lambda_\text{\normalfont min}],
										\\
										 M_{ii}(x)^{-1} & 
										M_{ii}(x) \in (\lambda_\text{\normalfont min},
										\lambda_\text{\normalfont max}),\\
										\lambda_\text{\normalfont max}^{-1} &
										M_{ii}(x) \in
										[\lambda_\text{\normalfont max},\infty).
								\end{cases}
								\label{eq:Pstar_both}
						\end{align}
		\end{enumerate}
		\label{thm:main}
\end{theorem}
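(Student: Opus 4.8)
The plan is to handle all four cases by reducing each inner problem to a collection of elementary one-dimensional convex problems, using throughout that $M \triangleq M(x) \succ 0$ is fixed and that, by Proposition~\ref{prop:convexity}, each inner problem is convex with \emph{at most one} minimizer. Thus in each case it suffices to exhibit a feasible $P$ that either satisfies the first-order optimality condition or attains a valid lower bound on the objective over the feasible set; uniqueness then comes for free.

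For Problem~\ref{prob:joint}, the conic constraint is inactive at any solution (Remark~\ref{rem:interior}), so I would set the gradient of $P \mapsto -\log\det P + \langle M, P\rangle$, namely $-P^{-1} + M$, to zero; this gives $P^\star = M^{-1} \succ 0$, which is the global minimizer by convexity. For Problem~\ref{prob:joint_diagonal}, writing $P = \mathrm{Diag}(p_1,\dots,p_m)$ makes the objective separable as $\sum_i \bigl(-\log p_i + M_{ii}\,p_i\bigr)$, and each strictly convex scalar term on $(0,\infty)$ is minimized at $p_i = M_{ii}^{-1}$, yielding $P^\star = \mathrm{Diag}(M)^{-1}$. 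Problem~\ref{prob:joint_diagonal_constrained} is identical except each scalar problem is now posed on the interval $[\lambda_\text{max}^{-1},\lambda_\text{min}^{-1}]$, so its minimizer is the Euclidean projection of $M_{ii}^{-1}$ onto that interval; translating the three projection regimes into conditions on $M_{ii}$ versus $\lambda_\text{min},\lambda_\text{max}$ reproduces \eqref{eq:Pstar_both}.

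The only genuinely non-routine case is Problem~\ref{prob:joint_constrained}, and here I would first apply the orthogonal change of variables $Q = U(x)^\top P\, U(x)$, under which the objective becomes $-\log\det Q + \langle D(x), Q\rangle = -\log\det Q + \sum_i D_{ii}(x)\,Q_{ii}$ while the constraint $\lambda_\text{max}^{-1} I \preceq P \preceq \lambda_\text{min}^{-1} I$ is preserved verbatim, being invariant under conjugation by an orthogonal matrix. The key observation is that the linear term sees $Q$ only through its diagonal, whereas Hadamard's inequality gives $-\log\det Q \ge -\sum_i \log Q_{ii}$ for $Q \succeq 0$, with equality iff $Q$ is diagonal; moreover any feasible $Q$ has $Q_{ii}\in[\lambda_\text{max}^{-1},\lambda_\text{min}^{-1}]$ since the diagonal entries of a matrix sandwiched between $aI$ and $bI$ lie in $[a,b]$. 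Hence the objective is bounded below by $\sum_i \min_{t\in[\lambda_\text{max}^{-1},\lambda_\text{min}^{-1}]}\bigl(-\log t + D_{ii}(x)\,t\bigr)$, and this bound is attained by the diagonal matrix $\Lambda(x)$ whose entries are the projections of $D_{ii}(x)^{-1}$ onto $[\lambda_\text{max}^{-1},\lambda_\text{min}^{-1}]$ — exactly the stated formula. Transforming back gives $P^\star(x) = U(x)\Lambda(x)U(x)^\top$. I expect the main obstacle to be precisely this diagonalization step and its bookkeeping: checking invariance of the constraint set, applying Hadamard with its equality case, and verifying that when $M(x)$ has repeated eigenvalues the resulting $P^\star(x)$ does not depend on the (non-unique) choice of eigenbasis $U(x)$ — which holds because equal values of $D_{ii}(x)$ yield equal values of $\Lambda_{ii}(x)$. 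Uniqueness in all four cases follows from Proposition~\ref{prop:convexity}.
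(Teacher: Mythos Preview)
Your proposal is correct. For Problems~\ref{prob:joint}, \ref{prob:joint_diagonal}, and \ref{prob:joint_diagonal_constrained} you and the paper do essentially the same thing (the paper phrases the separable scalar problems via KKT, you via direct minimization/projection, but the content is identical). The genuine divergence is Problem~\ref{prob:joint_constrained}: the paper constructs explicit dual multipliers $\underline{Q}^\star = U(x)\underline{\Lambda}(x)U(x)^\top$ and $\overline{Q}^\star = U(x)\overline{\Lambda}(x)U(x)^\top$ and verifies the full KKT system, whereas you bypass duality entirely by conjugating to the eigenbasis of $M(x)$ and invoking Hadamard's inequality to show that the optimum over all feasible $Q$ is attained among diagonal matrices. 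Your route is more elementary and self-contained (no Lagrangian bookkeeping, and it makes transparent \emph{why} $P^\star$ shares the eigenvectors of $M$), and you additionally address the well-definedness of $P^\star(x)$ under repeated eigenvalues, which the paper leaves implicit. The paper's KKT approach, on the other hand, yields the dual variables explicitly, which could be useful for sensitivity analysis or for extending to constraint sets where a Hadamard-type diagonalization trick is not available.
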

\begin{proof}
		See Appendix \ref{sec:proof_main}.
\end{proof}

As we saw in Remark~\ref{rem:jointMLRemark}, the objective function in
the ML formulation for estimating the noise covariance can be written as
$-\log\det P + \langle S(x), P \rangle$ which has a similar form to
the cost function in \eqref{eq:inner}. However, unlike $M(x)$, the sample covariance $S(x)$
can become singular. Therefore, Theorem~\ref{thm:main} readily applies
to the ML case (and its constrained variants) with an important
exception: without the constraint $\Sigma \succeq \lambda_{\text{min}}
I$, if $S(x)$ becomes singular, the problem becomes unbounded from below
and thus ML estimation becomes ill-posed. This is formally proved in the
following theorem.

\begin{theorem}
		Consider the (unconstrained) joint ML estimation problem (Remark~\ref{rem:jointMLRemark}),
		\begin{align}
				\normalfont
				\underset{x \in \Mcal, P \succeq 0}{\text{minimize}}
				\,\, - \log\det P + \left\langle
				S(x),P\right\rangle.
				\label{}
		\end{align}
		If $S(x)$ is singular, this problem (and the corresponding ``inner
		subproblem'') is unbounded below and thus does not
		have a solution.
		\label{thm:singular}
\end{theorem}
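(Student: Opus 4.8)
The plan is to exhibit, for any fixed $x$ at which $S(x)$ is singular, an explicit \emph{feasible} ray of information matrices along which the objective diverges to $-\infty$. Since the outer minimization over $x \in \Mcal$ can only decrease the value further, this one construction settles both the ``inner subproblem'' at that $x$ and the full joint ML problem at once, and the non-existence of a minimizer follows immediately from unboundedness below.

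First I would fix $x$ with $S(x)$ singular. Because $S(x) \succeq 0$ by construction \eqref{eq:sample_cov}, singularity is equivalent to its smallest eigenvalue being $0$; pick a corresponding unit eigenvector $v$, so that $S(x)v = 0$ and hence $v^\top S(x) v = 0$. For a scalar $t > 0$, define $P_t \triangleq I + t\, v v^\top$. Then $P_t \succ 0$, so $P_t$ is feasible for the unconstrained joint ML problem (and for its inner subproblem at this $x$), and it remains to evaluate the cost $-\log\det P_t + \langle S(x), P_t\rangle$ along this ray.

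The two terms decouple in exactly the way we want. Using the cyclic property of the trace, $\langle S(x), P_t \rangle = \tr S(x) + t\, \tr\!\big(S(x) v v^\top\big) = \tr S(x) + t\, v^\top S(x) v = \tr S(x)$, a finite constant independent of $t$. For the first term, the matrix determinant lemma gives $\det P_t = 1 + t\,\|v\|^2 = 1 + t$, so $-\log\det P_t = -\log(1+t)$. Therefore the objective at $(x, P_t)$ equals $-\log(1+t) + \tr S(x) \to -\infty$ as $t \to \infty$. This shows the inner subproblem at $x$ is unbounded below; taking the infimum over $x \in \Mcal$ as well, the joint problem is unbounded below, and consequently admits no optimal solution.

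I do not anticipate a genuine obstacle here: the only real idea is the choice of perturbation direction, namely that adding mass to $P$ along a kernel direction of $S(x)$ inflates $\det P$ while leaving the linear penalty $\langle S(x), P\rangle$ untouched, so the $\log\det$ reward is no longer balanced by any cost. The points to be careful about are (i) phrasing the hypothesis at the intended level of generality---the pathology occurs as soon as $S(x)$ is singular for even one admissible $x$, which, as noted in Remark~\ref{rem:singular}, is common in practice---and (ii) recording that the very same construction also makes the inner subproblem unbounded, sharpening the observation in Remark~\ref{rem:interior}: here not only can no optimal $P^\star$ be singular, no optimal $P^\star$ exists at all.
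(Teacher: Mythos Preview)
Your proof is correct and follows essentially the same approach as the paper: both perturb $P$ along a rank-one direction $vv^\top$ in the kernel of $S(x)$ and invoke the matrix determinant lemma to show the $-\log\det$ term diverges while the linear term stays bounded. The only cosmetic difference is that the paper starts from a generic $P_0\succ 0$ whereas you take $P_0=I$, which is a perfectly valid (and slightly cleaner) special case.
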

\begin{proof}
		See Appendix \ref{sec:proof_singular}.
\end{proof}

\begin{remark}
		Theorem~\ref{thm:singular} shows that, without the Wishart prior, if
		$S(x)$ is singular, the unconstrained joint estimation problem
		(specifically, without $\Sigma \succeq \lambda_\text{min} I$)
		becomes ill-posed.  Similarly, singularity of $S(x)$ can make 
		Problems~\ref{prob:joint} and
		\ref{prob:joint_diagonal} ill-conditioned when the prior is weak (i.e.,
		$w_\text{prior}$ is small) as the corresponding objective
		function becomes very sensitive in certain directions that
		correspond to the smallest eigenvalues of $M(x)$. 
		The sample covariance matrix $S(x)$ could be singular in
		many different situations, but two particular cases that can lead to
		singularity are as follows:
		\begin{enumerate}[leftmargin=*]
				\item When $k < m$ (i.e., there are not enough measurements to
						estimate $\Sigtrue$), $S(x)$ will be singular at any $x
						\in \Mcal$.
				\item Specific values of $x \in \Mcal$ can lead to
						singularity in some problems. For example, consider the problem of
						estimating odometry noise covariance matrix in 
						PGO. Let $x_\text{odo}$ be
						the odometry estimate obtained from composing odometry
						measurements. At $x = x_\text{odo}$, the residuals
						$r_i(x_\text{odo})$ (and thus $S(x_\text{odo})$) will be
						zero.
		\end{enumerate}
		\label{rem:singular}
\end{remark}

Theorem~\ref{thm:main} has a clear geometric
interpretation.
For instance, the optimal covariance matrix $\Sigma^\star(x) = {P^\star(x)}^{-1}$ for
Problem~\ref{prob:joint_constrained} \eqref{eq:Pstar_joint_constrained} has the following properties: (i) it
preserves the eigenvectors of $M(x)$, which define the principal axes of
the associated confidence ellipsoid; (ii) it matches $M(x)$ along
directions corresponding to eigenvalues within the range
$[\lambda_\text{min}, \lambda_\text{max}]$; and (iii) it only adjusts 
the radii of the ellipsoid along the remaining principal axes to satisfy
the constraint. This is visualized in Figure~\ref{fig:ellipse}.
Similarly, the confidence ellipsoid associated to \eqref{eq:Pstar_diag} 
is obtained by projecting $M(x)$ onto the set of diagonal matrices (i.e.,
axis-aligned ellipsoids). Finally, in the case of \eqref{eq:Pstar_both}, the radii
of the axis-aligned ellipsoid are adjusted
to satisfy the eigenvalue constraint.

\begin{figure}[tbp] 
  \centering 
  \includegraphics[width=0.30\textwidth]{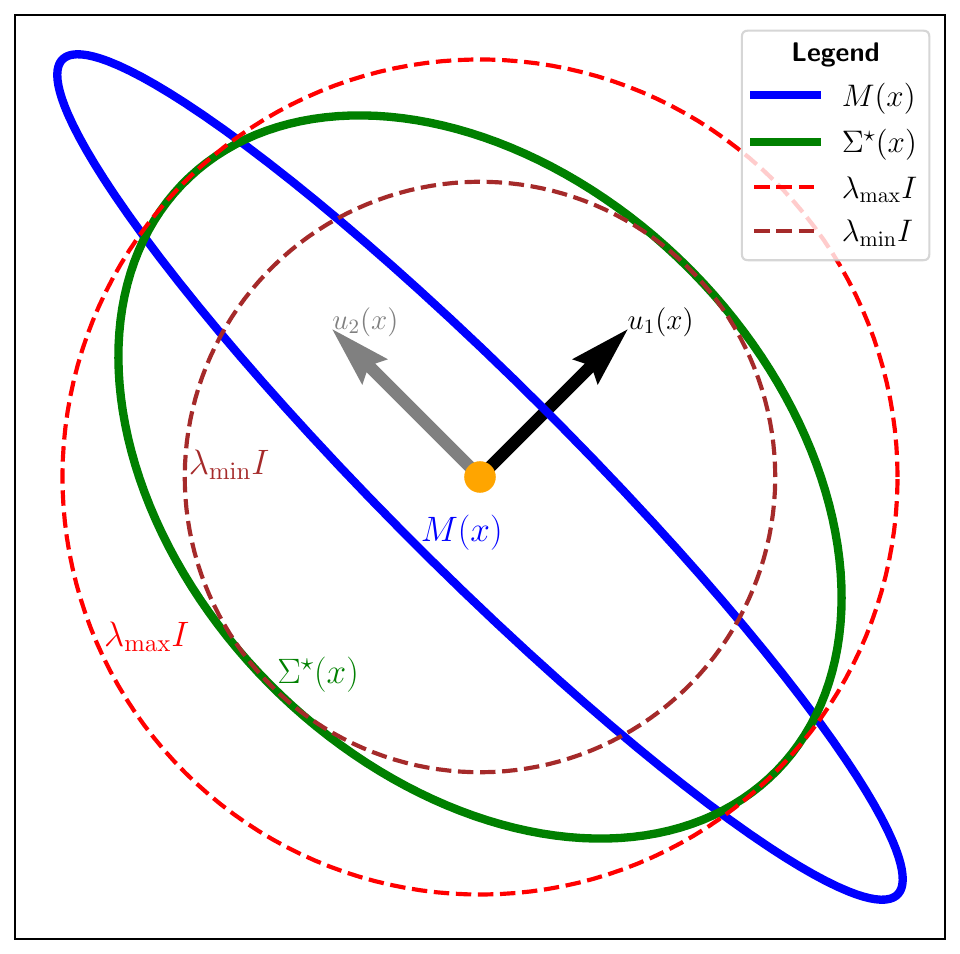} 
  \caption{The confidence ellipses corresponding to 
  $M(x)$ (in \textcolor{blue}{blue}) and the optimal covariance
  matrix $\Sigma^\star(x) = {P^\star(x)}^{-1}$ as given in
  \eqref{eq:Pstar_joint_constrained} for Problem~\ref{prob:joint_constrained}
  (in \textcolor{green!50!black}{green}). The circles show the confidence
  ellipses associated to $\lambda_\text{min}I$ and $\lambda_\text{max}I$.  Note
  that the principal axes $u_1(x)$ and $u_2(x)$ of $M(x)$ remain unchanged,
  while its radii (along the principal axes) are adjusted to fit
  within the bounds of the circles.}
  \label{fig:ellipse} 
\end{figure}

\begin{remark}[Calibration]
		The inner subproblem \eqref{eq:inner} also arises in offline calibration. In this
		context, a calibration dataset $\Zcal^\text{cal}$ is provided with ground truth $\xtrue^\text{cal}$ (or a close approximation). The
		objective is to estimate the noise covariance matrix $\Sigtrue$ based on
		the calibration dataset, which can then
		be used for future datasets collected with the same sensors. 
		Note that this approach differs from
		the joint  problems (Problems~\ref{prob:joint}-\ref{prob:joint_diagonal_constrained}),
		where $\xtrue$ and $\Sigtrue$ must be estimated \emph{simultaneously} without
		the aid of a calibration dataset containing ground truth information.
		\label{rem:calib}
		Applying
		Theorem~\ref{thm:main} at $x = \xtrue^\text{cal}$ directly provides the optimal
		noise covariance matrix for this scenario. 
		If calibration is performed
		using an approximate ground truth $x^{\text{cal}} \approx
		\xtrue^{\text{cal}}$, the estimated noise covariance matrix will be
		biased.
\end{remark}

\subsection{Two Important Extensions}
\subsubsection{\normalfont \textbf{Heteroscedastic Measurements}}
\label{rem:multiple}
		For simplicity, we have so far assumed that measurement noises are
		identically distributed (i.e., homoscedastic). However, in general, there may be $T$ distinct
		\emph{types} of measurements (e.g., obtained using different sensors in
				sensor fusion problems, odometry vs. loop closure in
		PGO, etc), where the noises corrupting each type are
		identically distributed. In such cases, the inner problem
		\eqref{eq:inner} 
		decomposes into $T$ independent problems (with different $M(x)$),
		solving each yields one of the $T$ noise information 
		matrices. Theorem~\ref{thm:main}
		can then be applied independently to each of these problems to find the
		analytical optimal solution for each noise information matrix as a function of $x$. 

		\subsubsection{\normalfont \textbf{Preprocessed Measurements and Non-Additive Noise}}
		In SLAM, raw measurements are often preprocessed and transformed
		\emph{nonlinearly}
		into standard models supported by popular solvers. For instance, raw
		range-bearing measurements (corrupted by additive noise with covariance
		$\Sigtrue$) are often expressed in Cartesian coordinates. As a result,
		although the raw measurements generated by the sensor have identically
		distributed noise,
		the \emph{transformed measurements} that appear
		in the least squares problem may have different covariances because of
		the nonlinear transformation. 
		Let $\Sigtrue$ be the covariance of raw measurements, and $\Sigma_i$ be
		the covariance of the $i$th transformed measurement. In practice,
		$\Sigma_i$ is approximated by linearization, i.e., $\Sigma_i \approx J_i
		\Sigtrue J_i^\top$ in which $J_i$ is the (known) Jacobian of the transformation.
		It is easy to verify that Theorem~\ref{thm:main} readily extends to this
		case when $J_i$'s are full-rank square matrices by replacing
		the sample covariance $S(x)$ as defined in \eqref{eq:sample_cov} with 
		\begin{align}
		   	\widetilde{S}(x) \triangleq \frac{1}{k} \sum_{i=1}^k J_i^{-1} r_i(x) r_i(x)^\top
		   	J_i^{-\top}. 
		\end{align}
		A similar technique can be used when measurements are affected by
		zero-mean Gaussian noise in a \emph{nonlinear} manner, i.e., $z_i =
		h_i(\xtrue,\epsilon_i)$ (in that case, the
		Jacobians of $h_i$'s with respect to noise will in general depend on $x$).
		\label{rem:preproc}

\section{Algorithms for Joint Estimation}
\label{sec:alg}
In principle, one can employ
existing constrained optimization methods to directly solve (locally)
\problems{}. However, this requires substantial modification of
existing highly optimized solvers such as
\cite{Agarwal_Ceres_Solver_2022,gtsam,kuemmerle2011g2o} and thus may not be
ideal. In this section, present two types of algorithms
that leverage Theorem~\ref{thm:main} for solving the joint estimation problems.

\subsection{Variable Elimination}
\begin{algorithm}[t]
    \caption{Variable Elimination for Joint MAP}
    \label{alg:elimination}
    \begin{algorithmic}[1] 
			\Procedure{VariableElimination}{}
			\State Use a local optimization method to find
			\begin{align}
				 x^\star \in \argmin_{x \in \Mcal}
				 -\log\det P^\star(x) +  \langle M(x), P^\star(x)
						 \rangle
					\nonumber
			\end{align}
			\State \textbf{return} $\big(\xmle, P^\star(\xmle)\big)$
        \EndProcedure
    \end{algorithmic}
\end{algorithm}
We can eliminate $P$ from the joint problems \eqref{eq:joint_information} by plugging in the optimal
information matrix (as a function of $x$) $P^\star(x)$ for the inner subproblem
\eqref{eq:inner} provided in Theorem~\ref{thm:main}.
This leads to the following \emph{reduced} optimization problem in $x$:
\begin{align}
				 \underset{x \in \Mcal}{\text{minimize}} \quad
				 -\log\det P^\star(x) +  \langle M(x), P^\star(x)
						 \rangle.
		\label{eq:joint_eliminated}
\end{align}
Therefore, if $x^\star \in \Mcal$ is an optimal solution for the above problem, then
$x^\star$ and $P^\star(x^\star)$ are MAP estimates for $\xtrue$
and $\Ptrue$, respectively. This suggests the simple procedure outlined in
Algorithm~\ref{alg:elimination}. Note that the reduced problem \eqref{eq:joint_eliminated} (like \problems{}) is a \emph{non}-convex
optimization problem, and thus the first step in Algorithm~\ref{alg:elimination} is subject to local minima.

\begin{remark}[Reduced Problem for Problems~\ref{prob:joint}
		and \ref{prob:joint_diagonal}]
		The objective function of the reduced problem
		\eqref{eq:joint_eliminated} further simplifies in the case of
		Problems~\ref{prob:joint} and \ref{prob:joint_diagonal}.
		Specifically, for any $x \in \Mcal$, the linear term in the objective
		(i.e., $\langle M(x),
		P^\star(x) \rangle$) is constant for the values of $P^\star(x)$ 
		given in \eqref{eq:first_statement} and \eqref{eq:Pstar_diag}; 		\begin{align}
				P^\star(x) = M(x)^{-1} &\Rightarrow \langle M(x), P^\star(x)
				\rangle = m, \\
				P^\star(x) = \mathrm{Diag}\big(M(x)\big)^{-1} &\Rightarrow
				\langle M(x), P^\star(x)
				\rangle = m.
				\label{<+label+>}
		\end{align}
		Therefore, in these cases the reduced problem further simplifies to the
		following:
		\begin{align}
				 \underset{x \in \Mcal}{\text{minimize}} & \quad 
				 \log\det M(x), \\
				 \underset{x \in \Mcal}{\text{minimize}}  & \quad
				 \log\det \mathrm{Diag}(M(x)).
				 \label{<+label+>}
		\end{align}
		These simplified problems have an intuitive geometric interpretation (as
		noted in \cite{zhan2025generalized} for the unconstrained ML estimation
case): the MAP estimate of $\xtrue$ is the value of $x \in \Mcal$ that minimizes
the volume of confidence ellipsoid (in Problem~\ref{prob:joint}) and the volume
of the projected (onto the standard basis) ellipsoid (in
Problem~\ref{prob:joint_diagonal}) characterised by $M(x)$.
		\label{rem:volume}
\end{remark}

The variable elimination algorithm has the following practical drawbacks:
\begin{enumerate}[leftmargin=*]
		\item In many real-world estimation problems, the
				residuals $r_i$ are typically sparse, meaning each measurement
				depends on only a small subset of elements in $x$. Exploiting
				this sparsity is essential for solving large-scale problems
				efficiently. However, this sparse structure is
				generally lost after eliminating $P$ in the reduced problem
				\eqref{eq:joint_eliminated}.  
		\item Popular solvers in robotics and computer vision such as \cite{Agarwal_Ceres_Solver_2022, gtsam, kuemmerle2011g2o} 
				are
				primarily nonlinear least squares solvers that assume $\Sigtrue$
				is known and focus solely on optimizing $x$. Consequently, these
				highly optimized tools cannot be directly applied to solve the
				reduced problem in \eqref{eq:joint_eliminated}. 
\end{enumerate}

\subsection{Block-Coordinate Descent}
\label{sec:BCD}
In this section we show how block-coordinate descent (BCD) methods can be
used to solve the problem of interest. A BCD-type algorithm alternates between the
following two steps until a stopping condition (e.g., convergence) is satisfied:
\begin{enumerate}[leftmargin=*]
		\item 	Fix $P$ to its most recent value
				and minimize the joint MAP objective function in
				\problems{} with respect to $x
				\in \Mcal$. This results in a standard nonlinear
				least squares problem (where residuals are weighted by $P$) over $\Mcal$, which can be (locally)
				solved using existing solvers such as
				\cite{Agarwal_Ceres_Solver_2022, gtsam, kuemmerle2011g2o} (see
				Remark~\ref{rem:fixedCov}).
		\item   Fix $x$ to its most recent value and
				minimize the joint MAP objective function with respect to $P
				\succeq 0$, subject to the constraints in \eqref{eq:inner}. This
				step reduces to solving the inner subproblem  for which we have
				analytical optimal solutions $P^\star(x)$ provided by Theorem~\ref{thm:main}.
\end{enumerate} 
Two variants of this procedure are shown in Algorithms~\ref{alg:hybrid-BCD} and
\ref{alg:BCD}.
In Step~1 of Algorithm~\ref{alg:hybrid-BCD}, $R(x, P) \triangleq
\langle M(x), P \rangle$ denotes the component of the joint cost function $F(x,
P)$ that depends on $x$. 
As demonstrated in Remark~\ref{rem:fixedCov}, minimizing this function with
respect to $x$ for a fixed $P$ is equivalent to minimizing the associated
weighted nonlinear least squares objective.
Moreover, while Algorithm~\ref{alg:hybrid-BCD}
uses Riemannian gradient descent to update $x$ in Step 1, in principle one can
use any (trust-region or line-search) optimization method that produces a
descent iteration such as those already implemented in
\cite{Agarwal_Ceres_Solver_2022, gtsam, kuemmerle2011g2o}.\footnote{We analyze
		the convergence properties of Algorithm~\ref{alg:hybrid-BCD} in
Theorem~\ref{thm:hybrid-BCD} when Riemannian gradient descent is used in Step
1.}. Algorithm~\ref{alg:BCD} is applicable to problems where in Step~1 of BCD
one can exactly minimize
$F(x,P)$ over $x$ and find the (unique)
minimizer $x^t$ for the current value of $P$.

The BCD algorithms of the type considered here address the limitations of Algorithm~\ref{alg:elimination}.
Specifically, the problem in the first step can be readily solved using standard
solvers widely used in robotics and computer vision, which are also capable
of exploiting sparsity in residuals. The second step is highly efficient, as
it only requires computing the $m \times m$ matrix $M(x)$ as defined in
\eqref{eq:M},   
which can be done in $O(k \cdot m^2)$ time.
In the case of Problem~\ref{prob:joint_constrained}, one must also compute the 
eigendecomposition of $M(x)$.
In practice, $m$ (the dimension of the residuals) is typically a small constant
(i.e., $m = O(1)$; e.g., in 3D PGO, $m = 6$),
and therefore the overall time complexity of the second step of BCD is $O(k)$,
i.e., linear in the number of
measurements. In a 2D PGO problem (i.e., $m=3$) with $k=5{,}598$ measurements, updating the
information matrix in Step~2 takes about one millisecond on a laptop
CPU.

\begin{algorithm}[t]
    \caption{Hybrid Block-Coordinate Descent for Joint MAP}
    \label{alg:hybrid-BCD}
    \begin{algorithmic}[1] 
			\Procedure{BCD}{$x_\text{init}$}
			\State $t\leftarrow 0$
			\State $x^0 \leftarrow x_\text{init}$
			\State {\color{green!40!black} {// Initialize the information
			matrix}}
			\State $P^0 \leftarrow P^\star(x_\text{init})$
            \While{$t\leq \tau$}
			\State $t\leftarrow t+1$
			\State {\color{green!40!black} {// Step 1: Update
			$x^t$  using a descent step using retraction
			$\text{Retr}_{x^{t-1}}(\cdot)$, Riemannian
			gradient $\mathrm{grad}_x \, R(\cdot,\cdot)$ with
			respect to $x$, and step-size $\eta$ where $R(x,P) \triangleq
			\langle M(x),P \rangle$; see
Theorem~\ref{thm:hybrid-BCD} and Appendix~\ref{sec:geo_back}.} }
\State $x^t\leftarrow \text{Retr}_{x^{t-1}}\left(-\eta \,
\mathrm{grad}_x \, R (x^{t-1},P^{t-1}) \right)$
\State {\color{green!40!black} {// Step 2: optimize $P$
			(Theorem~\ref{thm:main})}}
			\State $P^t \leftarrow P^\star(x^t)$ \label{line:covupdate}
            \EndWhile
			\State \textbf{return} $(x^t,P^t)$
        \EndProcedure
    \end{algorithmic}
\end{algorithm}
\begin{algorithm}[t]
    \caption{Block-Exact BCD for Joint MAP}
    \label{alg:BCD}
    \begin{algorithmic}[1] 
			\Procedure{BCD}{$x_\text{init}$}
			\State $t\leftarrow 0$
			\State {\color{green!40!black} {// Initialize the information
			matrix}}
			\State $P^0 \leftarrow P^\star(x_\text{init})$
            \While{not converged}
			\State $t\leftarrow t+1$
			\State {\color{green!40!black} {// Step 1: optimize $x$}}
			\State $x^t \in \argmin_{x \in \Mcal} \,\, \frac{1}{2} \sum_{i=1}^k
			\|r_i(x)\|^{2}_{P^{t-1}}$
			\State {\color{green!40!black} {// Step 2: optimize $P$
			(Theorem~\ref{thm:main})}}
			\State $P^t \leftarrow P^\star(x^t)$
            \EndWhile
			\State \textbf{return} $(x^t,P^t)$
        \EndProcedure
    \end{algorithmic}
\end{algorithm}

Before studying the convergence properties of these algorithms we introduce the
necessary assumption below. See Appendix~\ref{sec:geo_back} for background
information.
\begin{assumption}\label{ass:basic}
		Let $\mathcal{P}$ denote the constraint set for the noise information matrix $P$.
		The sets $\mathcal{M}$ and $\mathcal{P}$ are closed and nonempty and the function $F$ is differentiable and its level set $\{(x,P) : F (x,P) \leq\gamma\}$
		is bounded for every scalar $\gamma$.
\end{assumption}
\begin{assumption}[Lipschitz Smoothness in $x$]\label{ass:blip}
		Function $R(x,P)$ is continuously differentiable and Lipschitz smooth in $x$, i.e., there exists a
		positive constant $L$ such that for all $(\xi,\Pi)\in \mathcal{M}\times
		\mathcal{P}$ and all
		$\zeta\in\mathcal{M}$:
		\begin{equation}
				\|\nabla_x R (\xi,\Pi) - \nabla_x R (\zeta,\Pi)\|\leq L \|\xi-\zeta\|. 
		\end{equation}
\end{assumption}
Next, we state the convergence result for Algorithm~\ref{alg:hybrid-BCD} when applied to
Problems \ref{prob:joint_constrained} and \ref{prob:joint_diagonal_constrained}.
\begin{theorem}\label{thm:hybrid-BCD}
	Let $\mathcal{M}$ and $\mathcal{P}$ be compact submanifolds of the
	Euclidean space. Under Assumptions~\ref{ass:basic} and \ref{ass:blip} and setting
	$\eta=1/\widetilde{L}$ with $\widetilde{L}$ defined in
	Lemma~\ref{lem:constants} in Appendix~\ref{sec:geo_back}, for the
	sequence $\{(x^t,P^t)\}$ generated by Algorithm~\ref{alg:hybrid-BCD} we have
	\begin{equation}
			\min_{t\in [\tau]} \|\mathrm{grad} \, F(x^t,P^t)\| \leq
			C \sqrt{
					\dfrac{F(x_\mathrm{init},P_{\mathrm{init}}) -
			F(x^\star,P^\star)}{\tau}},
	\end{equation}
where $C=\sqrt{2 \widetilde{L}} (1 + \sqrt{2}\alpha)$ where $\alpha$ is given in
Lemma~\ref{lem:constants}.
\label{thm:conv1}
\end{theorem}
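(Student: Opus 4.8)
The plan is to treat Algorithm~\ref{alg:hybrid-BCD} as an inexact/alternating descent scheme on the manifold $\Mcal \times \Pcal$ and to show that the combined cost $F$ decreases enough at each iteration to obtain the standard $O(1/\sqrt{\tau})$ stationarity bound. First I would analyze Step~2: since $P^t = P^\star(x^t)$ is the exact minimizer of $F(x^t,\cdot)$ over $\Pcal$ (Theorem~\ref{thm:main} and Proposition~\ref{prop:convexity}), we have $F(x^t,P^t)\le F(x^t,P^{t-1})$, and moreover, by strong convexity of $P\mapsto F(x^t,P)$ on the compact set $\Pcal$ (the $-\log\det$ term is strongly convex on the relevant sublevel set, whose boundedness is guaranteed by Assumption~\ref{ass:basic}), the gap $F(x^t,P^{t-1})-F(x^t,P^t)$ dominates $\|\mathrm{grad}_P F(x^t,P^{t-1})\|^2$ up to a constant; equivalently, since $P^t$ is stationary in $P$, $\|\mathrm{grad}_P F(x^t,P^t)\|=0$ and the $P$-component of the Riemannian gradient at the end of each iteration vanishes. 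Then I would analyze Step~1: $R(x,P)=\langle M(x),P\rangle$ is the only $x$-dependent part of $F$, so a Riemannian gradient step on $R(\cdot,P^{t-1})$ with retraction and step-size $\eta = 1/\widetilde L$ is a descent step on $F(\cdot,P^{t-1})$. Using the Lipschitz-type descent lemma for retractions on compact submanifolds (this is where $\widetilde L$ and $\alpha$ from Lemma~\ref{lem:constants} enter — $\widetilde L$ bounds the pullback's smoothness uniformly over $\Pcal$, and $\alpha$ bounds the discrepancy between the retraction and the exponential map), I get
\begin{equation}
		F(x^t,P^{t-1}) \le F(x^{t-1},P^{t-1}) - \tfrac{1}{2\widetilde L}\,\|\mathrm{grad}_x R(x^{t-1},P^{t-1})\|^2 . \nonumber
\end{equation}

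Next I would chain the two steps: $F(x^t,P^t)\le F(x^t,P^{t-1})\le F(x^{t-1},P^{t-1}) - \tfrac{1}{2\widetilde L}\|\mathrm{grad}_x R(x^{t-1},P^{t-1})\|^2$. Summing over $t=1,\dots,\tau$ and telescoping, and using that $F$ is bounded below by $F(x^\star,P^\star)$ (Assumption~\ref{ass:basic} plus compactness), yields $\min_{t\in[\tau]}\|\mathrm{grad}_x R(x^{t-1},P^{t-1})\|^2 \le \tfrac{2\widetilde L}{\tau}\big(F(x_\mathrm{init},P_\mathrm{init})-F(x^\star,P^\star)\big)$. The remaining work is to convert the $x$-gradient of $R$ at $(x^{t-1},P^{t-1})$ into a bound on the full Riemannian gradient of $F$ at the \emph{current} iterate $(x^t,P^t)$. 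For the $P$-block this is immediate: $\mathrm{grad}_P F(x^t,P^t)=0$ since $P^t$ is the exact constrained minimizer (here I must note that the optimal $P^\star(x^t)$ lies in the interior of the PSD cone by Remark~\ref{rem:interior}, and for the eigenvalue constraints in Problems~\ref{prob:joint_constrained}–\ref{prob:joint_diagonal_constrained} the KKT stationarity of the projection formula in Theorem~\ref{thm:main} gives $\mathrm{grad}_P F(x^t,P^t)=0$ on $\Pcal$). For the $x$-block, $\mathrm{grad}_x F(x^t,P^t)=\mathrm{grad}_x R(x^t,P^t)$, and I relate this to $\mathrm{grad}_x R(x^{t-1},P^{t-1})$ via the triangle inequality: the change in the base point $x^{t-1}\to x^t$ contributes $L_x$ times the step length $\le \eta\|\mathrm{grad}_x R(x^{t-1},P^{t-1})\|$ (Lipschitz smoothness in $x$, Assumption~\ref{ass:blip}), and the change $P^{t-1}\to P^t$ contributes a term controlled by the Lipschitz dependence of $\nabla_x R = \langle \nabla M(x),P\rangle$ on $P$, times $\|P^t-P^{t-1}\|$, which in turn is bounded by the Step-2 descent. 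Collecting these gives $\|\mathrm{grad}\,F(x^t,P^t)\| \le (1+\sqrt2\,\alpha)\|\mathrm{grad}_x R(x^{t-1},P^{t-1})\|$ after absorbing constants into $\alpha$ as in Lemma~\ref{lem:constants}, so that with $C=\sqrt{2\widetilde L}(1+\sqrt2\,\alpha)$,
\begin{equation}
		\min_{t\in[\tau]}\|\mathrm{grad}\,F(x^t,P^t)\| \le C\sqrt{\frac{F(x_\mathrm{init},P_\mathrm{init})-F(x^\star,P^\star)}{\tau}}, \nonumber
\end{equation}
which is the claim.

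The main obstacle I anticipate is the bookkeeping in the last step — bounding $\|\mathrm{grad}\,F(x^t,P^t)\|$ (gradient at the \emph{end} of iteration $t$, evaluated after both blocks have moved) in terms of $\|\mathrm{grad}_x R(x^{t-1},P^{t-1})\|$ (the quantity that the telescoping sum actually controls). This requires (i) a clean uniform Lipschitz constant for $x\mapsto\mathrm{grad}_x R(x,P)$ and for $P\mapsto\mathrm{grad}_x R(x,P)$ over the compact manifolds $\Mcal,\Pcal$ — which is exactly what the compactness hypothesis and Assumption~\ref{ass:blip} buy us, and what Lemma~\ref{lem:constants} presumably packages — and (ii) care with the retraction-vs-exponential discrepancy, since $\mathrm{grad}\,F$ lives in the tangent space at $x^t$ while the descent estimate is expressed at $x^{t-1}$; transporting between these tangent spaces on a compact submanifold of Euclidean space incurs a constant factor, again absorbed into $\alpha$. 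A secondary point needing attention is verifying that $\eta=1/\widetilde L$ is a valid (descent-guaranteeing) step size simultaneously for all $P\in\Pcal$, which is why $\widetilde L$ must be defined as a supremum over $\Pcal$ in Lemma~\ref{lem:constants} rather than the pointwise $L$ of Assumption~\ref{ass:blip}. Everything else is the standard telescoping argument for nonconvex (Riemannian) block-coordinate descent.
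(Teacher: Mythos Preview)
Your approach is essentially the paper's: the paper's own proof is a two-sentence appeal to \cite[Theorem~4]{peng2023block} (a general Riemannian block-coordinate-descent convergence result) specialized to $b=2$ blocks, and your sketch is precisely the standard argument underlying that result --- descent lemma for the retracted gradient step on $x$ (via Lemma~\ref{lem:constants}), monotone decrease from exact minimization in $P$, then telescoping. You have in fact written out more than the paper does.

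One simplification worth noting: the ``conversion'' step you flag as the main obstacle is not actually needed here. Because $P^t=P^\star(x^t)$ exactly minimizes $F(x^t,\cdot)$ on the submanifold $\Pcal$, you have $\mathrm{grad}_P F(x^t,P^t)=0$, so $\|\mathrm{grad}\,F(x^t,P^t)\|=\|\mathrm{grad}_x R(x^t,P^t)\|$. But this is exactly the gradient used in Step~1 of iteration $t{+}1$, i.e., the very quantity your telescoping sum controls (up to a harmless index shift). There is no need to transport gradients between $(x^{t-1},P^{t-1})$ and $(x^t,P^t)$ or to bound $\|P^t-P^{t-1}\|$; the argument closes with $C=\sqrt{2\widetilde L}$. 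The looser factor $(1+\sqrt2\,\alpha)$ in the stated constant is inherited from the general framework of \cite{peng2023block}, which does not assume exact minimization in either block, so your attempt to manufacture that factor via Lipschitz bookkeeping is chasing a constant that the present setting does not require.
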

\begin{proof}
		See Appendix~\ref{app:convergence_2nd}.
\end{proof}
As can be seen in Algorithm~\ref{alg:BCD}, one might be able to solve
the optimization problems associated with each of the coordinate blocks uniquely and exactly. For
example, for the case where the residuals $r_i(x)$ are affine functions of $x$
and $\mathcal{M}$ is convex, the optimization problem associated with $x$ has a
unique solution (assuming a non-singular Hessian) and can be solved exactly. This case satisfies the following
assumption.
\begin{assumption}\label{ass:unique}
	For all
	$\Pi\in\mathcal{P}$ and all $\xi\in\mathcal{M}$, the following problems have
	unique solutions:
	\begin{equation}
			\min_{x\in\mathcal{M}} \; F(x,\Pi), \quad \min_{P\in\mathcal{P}}\;
			F(\xi,P).
	\end{equation}
\end{assumption}
\begin{theorem}
	Under Assumptions~\ref{ass:basic} and \ref{ass:unique}, the sequence $\{(x^t,P^t)\}$
	generated by Algorithm~\ref{alg:BCD} is bounded and has limit points. Moreover,
	every limit point $(x^\star,P^\star)$ is a local minimum of the
	optimization problem. 
	\label{thm:conv2}
\end{theorem}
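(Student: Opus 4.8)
The plan is to carry out the classical convergence analysis of the two-block Gauss--Seidel (block coordinate descent) method with \emph{exact} block minimization, adapted to the nested structure of \eqref{eq:joint_information}. First, under Assumption~\ref{ass:unique} each of the two minimizations performed at iteration $t$ of Algorithm~\ref{alg:BCD} has a unique solution, so the sequence $\{(x^t,P^t)\}$ is well defined; moreover, by Theorem~\ref{thm:main} the $P$-update is exactly $P^t=P^\star(x^t)$, and by Remark~\ref{rem:fixedCov} the Step~1 problem $\min_x \tfrac12\sum_i\|r_i(x)\|_{P^{t-1}}^2$ coincides with $\min_x F(x,P^{t-1})$ up to an $x$-independent constant. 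Since $x^t$ globally minimizes $F(\cdot,P^{t-1})$ and $P^t$ globally minimizes $F(x^t,\cdot)$, we obtain the descent chain
\[
F(x^t,P^t)\ \le\ F(x^t,P^{t-1})\ \le\ F(x^{t-1},P^{t-1}),
\]
so $\{F(x^t,P^t)\}$ is non-increasing.

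Next I would establish boundedness and existence of limit points. Set $\gamma_0\triangleq F(x^0,P^0)$; by the descent chain all iterates lie in the sublevel set $\mathcal{L}_{\gamma_0}\triangleq\{(x,P)\in\mathcal{M}\times\mathcal{P}:F(x,P)\le\gamma_0\}$, which is closed ($\mathcal{M},\mathcal{P}$ closed, $F$ continuous) and bounded by Assumption~\ref{ass:basic}, hence compact; on $\mathcal{L}_{\gamma_0}$ the inequality $-\log\det P\le\gamma_0$ (using $\langle M(x),P\rangle\ge0$) together with boundedness of $P$ forces $P\succeq cI$ for some $c>0$, so $F$ is continuous --- indeed smooth --- there. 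Therefore $\{(x^t,P^t)\}$ is bounded and admits limit points, and the monotone, bounded-below sequence $\{F(x^t,P^t)\}$ converges to a finite value $F^\infty$; sandwiching $F(x^t,P^{t-1})$ between $F(x^{t-1},P^{t-1})$ and $F(x^t,P^t)$ shows $F(x^t,P^{t-1})\to F^\infty$ as well.

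The crux is showing that an arbitrary limit point $(x^\star,P^\star)$ is optimal within each block. Take a subsequence with $(x^{t_j},P^{t_j})\to(x^\star,P^\star)$; by continuity of $F$ on $\mathcal{L}_{\gamma_0}$, $F(x^\star,P^\star)=F^\infty$. For the $P$-block: $F(x^{t_j},P^{t_j})\le F(x^{t_j},P)$ for all $P\in\mathcal{P}$, and letting $j\to\infty$ gives $F(x^\star,P^\star)\le F(x^\star,P)$, so $P^\star$ is the (by Assumption~\ref{ass:unique}, unique) minimizer of $F(x^\star,\cdot)$ --- consistently, $P^\star=P^\star(x^\star)$. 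For the $x$-block one must account for the index shift, since the iteration pairs $x^{t_j}$ with $P^{t_j-1}$: passing to a further subsequence, by compactness $P^{t_j-1}\to\bar P\in\mathcal{P}$; from $F(x^{t_j},P^{t_j-1})\le F(x,P^{t_j-1})$ for all $x\in\mathcal{M}$, together with $F(x^{t_j},P^{t_j-1})\to F^\infty$ while the left-hand side also tends to $F(x^\star,\bar P)$, we get $F(x^\star,\bar P)=F^\infty=F(x^\star,P^\star)$ and $F(x^\star,\bar P)\le F(x,\bar P)$ for all $x$. The first identity means $\bar P$ attains the minimum of $F(x^\star,\cdot)$, so $\bar P=P^\star$ by the uniqueness just established; substituting back yields $F(x^\star,P^\star)\le F(x,P^\star)$ for all $x\in\mathcal{M}$, i.e.\ $x^\star$ is the (unique) minimizer of $F(\cdot,P^\star)$. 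Hence $x^\star$ globally minimizes $F(\cdot,P^\star)$ over $\mathcal{M}$ and $P^\star$ globally minimizes the convex inner subproblem $F(x^\star,\cdot)$ over $\mathcal{P}$ (Proposition~\ref{prop:convexity}); in particular $\mathrm{grad}\,F(x^\star,P^\star)=0$ and no feasible move in $x$ alone or in $P$ alone decreases $F$, so $(x^\star,P^\star)$ is a local minimum of \eqref{eq:joint_information}.

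I expect the main obstacle to be the index-shift argument for the $x$-block: because the iteration relates $x^t$ to $P^{t-1}$ rather than to $P^t$, one cannot pass to the limit directly, and the remedy is to extract a nested subsequence along which $P^{t_j-1}$ converges, exploit the vanishing of the successive objective gaps established above, and then use uniqueness of the inner minimizer (Assumption~\ref{ass:unique}) to identify that limit with $P^\star$. A secondary, routine point to nail down is the compactness of $\mathcal{L}_{\gamma_0}$ together with the uniform bound $P\succeq cI$ on it, which is what licenses using continuity of $F$ at limit points; everything else reduces to monotonicity and elementary subsequence extraction.
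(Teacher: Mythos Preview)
Your argument is considerably more detailed than the paper's, which simply invokes \cite[Theorem~1]{peng2023block} without elaboration. Everything up through block-coordinate optimality of the limit point is sound: the descent chain, compactness of $\mathcal{L}_{\gamma_0}$ (including the lower bound $P\succeq cI$ obtained from $\det P\ge e^{-\gamma_0}$ together with boundedness of $P$), and your treatment of the index shift via a nested subsequence combined with Assumption~\ref{ass:unique} are all correct and carefully done.

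The gap is your final sentence. The inference ``$\mathrm{grad}\,F(x^\star,P^\star)=0$ and no feasible move in $x$ alone or in $P$ alone decreases $F$, so $(x^\star,P^\star)$ is a local minimum'' is not valid in general: a block-coordinate (Nash) minimum can be a saddle of the joint objective. The scalar function $f(x,y)=x^2+y^2-3xy$ already exhibits the failure of this reasoning --- $(0,0)$ is the unique strict minimizer along each coordinate (with $x^\star(y)=\tfrac32 y$, $y^\star(x)=\tfrac32 x$), the gradient vanishes, yet the Hessian is indefinite. What your argument actually establishes is that every limit point is a \emph{stationary point} and a \emph{partial optimum}; this is the standard conclusion for two-block exact Gauss--Seidel. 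Upgrading to a genuine local minimum requires either problem-specific structure of $F$ that you have not exploited, or the result from \cite{peng2023block} that the paper defers to; as written, the last implication is unsupported.
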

\begin{proof}
	The proof of the theorem follows directly from \cite[Theorem
1]{peng2023block}.  
\end{proof}

\section{Experiments}
\label{sec:experiments}
\subsection{Linear Measurement Model}
\label{sec:explinear}

\begin{figure*}[htb]
    \centering

    \begin{subfigure}[t]{0.32\linewidth}
        \centering
        \includegraphics[width=\linewidth]{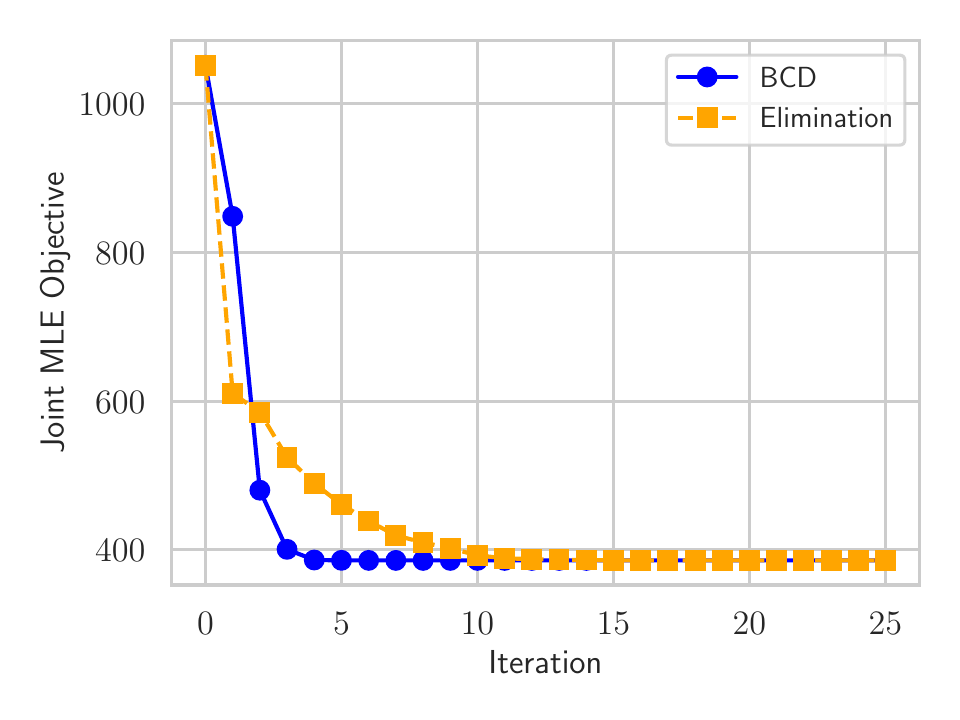}
		\caption{Objective value over iterations}
        \label{fig:objective_iterations}
    \end{subfigure}
    \hfill
    \begin{subfigure}[t]{0.32\linewidth}
        \centering
        \includegraphics[width=\linewidth]{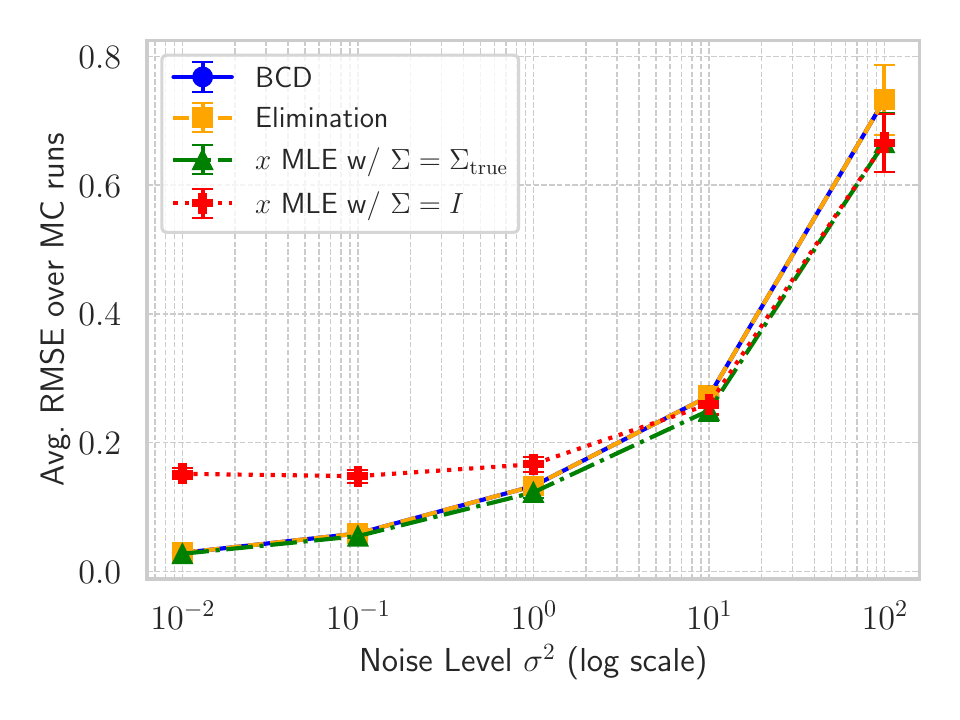}
        \caption{Average RMSE in estimating $x_\text{true}$}
        \label{fig:average_x_error}
    \end{subfigure}
    \hfill
    \begin{subfigure}[t]{0.32\linewidth}
        \centering
        \includegraphics[width=\linewidth]{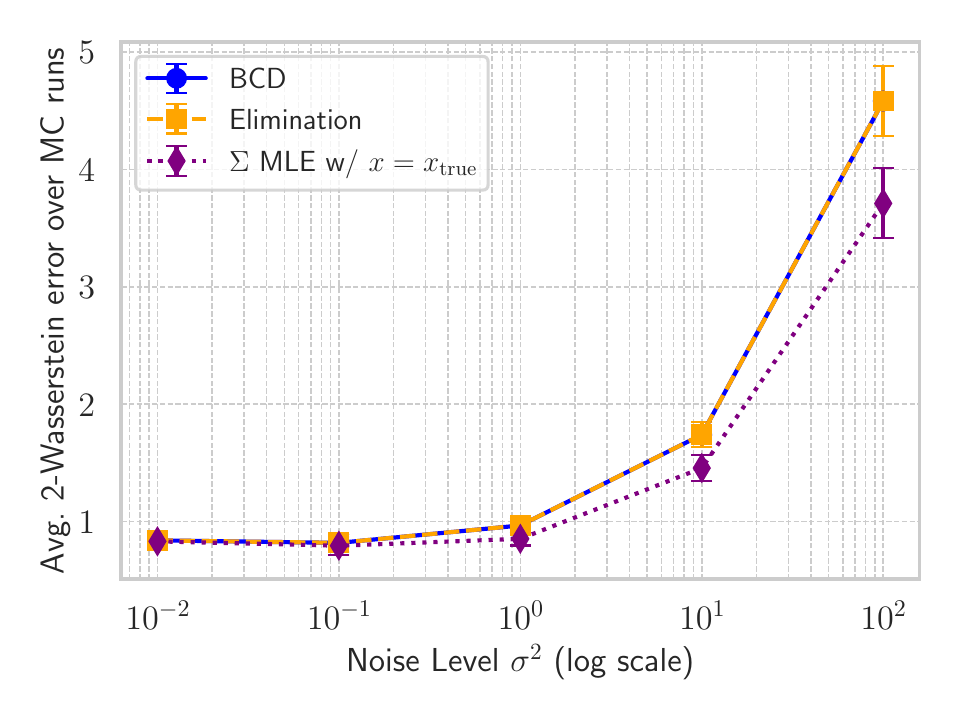}
		\caption{Average 2-Wasserstein error for noise covariance}
        \label{fig:average_C_error}
    \end{subfigure}
			\caption{Results of experiments with linear measurement models
					(Section~\ref{sec:explinear}). The results shown in
					Figures~\ref{fig:average_x_error} and \ref{fig:average_C_error} are
					averaged over $50$ Monte Carlo (MC) runs. The error bars in these
			figures represent the $95\%$ confidence intervals.}
    \label{fig:linear}
\end{figure*}

We first evaluate the algorithms on a simple linear measurement model. Although
the joint problem is still \emph{non}-convex in this scenario, the problem
is (strictly) convex in $x$ and $P$ separately, enabling exact minimization
in both steps of BCD (Algorithm~\ref{alg:BCD}) with analytical solutions (i.e., linear least squares and \eqref{eq:inner}).

\noindent\textbf{Setup:}
In these experiments, $\Mcal = \Rset^{20}$ and $\xtrue$ is set to the all-ones
vector. We generated $k = 50$ measurements, where the dimension of each
measurement is $m=5$. Measurement $z_i$ is generated according to
\begin{equation}
		z_i = H_i \xtrue + \epsilon_i, \quad i \in [k].
\end{equation}
Each measurement function $H_i \in \Rset^{5 \times 20}$ is a
random matrix drawn from the standard
normal distribution. The Measurement noise $\epsilon_i \sim \Ncal(0,\Sigtrue)$ where
\begin{equation}
		\Sigtrue = \Sigma^\text{base} + \sigma^2 I,
		\label{eq:setup}
\end{equation}
in which $\Sigma^\text{base} \succeq 0$ is a fixed random
covariance matrix, and $\sigma^2 \in
\{10^{-2},10^{-1},1,10,10^{2}\}$ is a variable that controls the noise level in our experiments.
We did not impose a prior on $P$, leading to unconstrained joint ML estimation of
$\xtrue$ and $\Sigtrue$.

We conducted 50 Monte Carlo simulations per noise level, each with a different
noise realization. In each trial, we generated measurement noise
according to the model described above and applied Elimination and BCD
(Algorithms~\ref{alg:elimination} and \ref{alg:BCD}) to estimate $\xtrue$ and
$\Sigtrue$ under the Problem~\ref{prob:joint} formulation. Both algorithms were
initialized with $x_\text{init} = 0$ and executed for up to $25$ iterations. The
Elimination algorithm uses the Limited-memory BFGS solver from SciPy
\cite{2020SciPy}.

\noindent\textbf{Metrics:}
We use the root mean square error (RMSE) to evaluate
the accuracy of estimating $\xtrue$. For covariance estimation accuracy, we
use the 2-Wasserstein distance between the true noise distribution $\Ncal(0,
\Sigtrue)$ and the estimated noise distribution $\Ncal(0, \Sigma^\star)$:
\begin{equation}
	\mathfrak{W}_2(\Ncal_\text{true},\Ncal^\star) = \sqrt{\mathrm{trace}\Big(\Sigtrue +
				\Sigma^\star - 2 \big(\Sigtrue^{\frac{1}{2}} \Sigma^\star
\Sigtrue^{\frac{1}{2}}\big)^{\frac{1}{2}} \Big)}.
		\label{eq:W2}
\end{equation}

\noindent\textbf{Results:}
\label{section:linearresuls}
The results are shown in Figure~\ref{fig:linear}.

Figure~\ref{fig:objective_iterations} illustrates the value of the objective
function in Problem~\ref{prob:joint} during one of the Monte Carlo simulations
($\sigma = 0.1$). The objective value for BCD is recorded after
updating $x$ (Step 1 of Algorithm~\ref{alg:BCD}). This figure demonstrates that
both methods eventually converge to the same objective value, although BCD
exhibits faster convergence.

Figure~\ref{fig:average_x_error} presents the average RMSE for the solution
$x^\star$, obtained by Elimination and BCD (at their final iteration), across
the Monte Carlo simulations for various noise levels. The figure also includes
average RMSEs for estimates obtained using fixed covariance matrices: $\Sigma =
\Sigtrue$ (i.e., the true noise covariance) and $\Sigma = I$ (i.e., an arbitrary
identity matrix often used in practice when $\Sigtrue$ is unknown). The results
show that Elimination and BCD achieve identical RMSEs across all noise levels,
with RMSE increasing as the noise level rises. 
Notably, under low noise, the accuracy of the
solutions produced by these algorithms matches that achieved when the true
covariance matrix $\Sigtrue$ is known. This indicates that the algorithms can
accurately estimate $\xtrue$ without prior knowledge of $\Sigtrue$. The gap
between the RMSEs widens as the noise level increases, which is expected because
the algorithms must jointly estimate $\xtrue$ and $\Sigtrue$ under a low
signal-to-noise ratio. Nonetheless, the RMSE trends consistently across noise
levels.

Furthermore, the results highlight that using a na\"{i}ve approximation of the
noise covariance matrix (i.e., fixing $\Sigma = I$) leads to a poor estimate 
$x^\star$. However, as the noise level increases, the RMSE for this
approximation eventually approaches that of the case where $\Sigtrue$ is
perfectly known. This behavior is partly due to the setup in \eqref{eq:setup}:
as $\sigma^2$ grows, the diagonal components of the covariance matrix dominate,
making the true covariance approximately isotropic. Since the estimation of
$\xtrue$ (given a fixed noise covariance matrix) is invariant to the scaling of
the covariance matrix, the performance of $\Sigma = I$ aligns with that of
$\Sigma = \Sigtrue$ despite the scaling discrepancy.

Finally, Figure~\ref{fig:average_C_error} shows the 2-Wasserstein distance
averaged over the Monte Carlo simulations. Similar to the previous figure, the
covariance estimates obtained by Elimination and BCD are consistently close to
those derived using $x = \xtrue$. 
As the noise level increases, covariance estimation error also rises, widening
the gap, as expected.

\subsection{Pose-Graph Optimization Ablations}
\noindent\textbf{Dataset:}
We used a popular synthetic PGO benchmark, the Manhattan dataset
\cite{olson2006fast}, and generated new measurement realizations with varying
values of the actual noise covariance matrix. The dataset consists of $3{,}500$
poses and $k = 5{,}598$ relative-pose measurements.  This dataset is notoriously poorly
connected \cite{khosoussi2019reliable}. Therefore, to analyze the effect of connectivity
on covariance estimation, we also performed experiments on modified versions of
this dataset, where additional loop closures were introduced by connecting pose
$i$ to poses $i+2$ and $i+3$.  This modification increased the total number of
measurements to $k = 12{,}593$.

We generated zero-mean Gaussian noise 
in the Lie algebra $\mathrm{se}(2) \cong \Rset^3$ for the following models:
\begin{enumerate}[leftmargin=*]
    \item \textbf{Homoscedastic Measurements:} 
          In these experiments, all measurements share the same information matrix, 
          given by $\alpha \times \mathrm{diag}(20, 40, 30)$, where $\alpha$ is a scaling factor that controls 
          the noise level (hereafter referred to as the ``information level'').
          
    \item \textbf{Heteroscedastic Measurements:} 
          We introduce two distinct noise models for odometry and loop-closure edges. 
          The true information matrix for odometry noise is fixed at 
          $\mathrm{diag}(1000, 1000, 800)$, while the loop-closure noise
		  information matrix 
          is varied as $\alpha \times \mathrm{diag}(20, 40, 30)$.
\end{enumerate}
For each value of the information level $\alpha \in \{5, 10, 20, 30, 40\}$, we conducted $50$ Monte Carlo simulations with
different noise realizations.

\noindent\textbf{Algorithms:}
We implemented a variant of Algorithm~\ref{alg:hybrid-BCD} in C++, based on g2o
\cite{kuemmerle2011g2o}, for PGO problems.  Instead of using Riemannian gradient
descent, we used g2o's implementation of Powell's Dog-Leg method \cite{powell1970new} on
$\mathrm{SE}(2)$ to optimize $x$.
In each outer iteration, our implementation retrieves the residuals for each
measurement, computes $M(x)$ as defined in \eqref{eq:M}, and updates the noise
covariance estimate using Theorem~\ref{thm:main}.  
We set $\lambda_\text{min} = 10^{-4}$ and
$\lambda_\text{max} = 10^4$ in all experiments to handle cases where the smallest
eigenvalue of $S(x)$ is (approximately) zero.  We then perform a single iteration of Powell's
Dog-Leg method \cite{powell1970new} to update the primary parameters $x$,
initializing the solver at the latest estimate of $x$. 
To ensure convergence across all Monte Carlo trials, we ran 13 outer iterations.
In this dataset, the per-iteration computational overhead of our
algorithm (i.e., time spent updating the covariance matrix in
Step~2 of Algorithm~\ref{alg:hybrid-BCD} relative to g2o with a given
covariance) ranged from 0.9 to 1.5 milliseconds on an Intel i7-6820HQ CPU, which
is negligible.

We report the results for the following methods:
\begin{enumerate}[leftmargin=*]
		\item \textbf{BCD}: 
				In line~\ref{line:covupdate} of
				Algorithm~\ref{alg:hybrid-BCD},
				the noise information
				matrix is estimated using the ML estimate (i.e., no prior)
				subject to eigenvalue constraints. This is equivalent to
				\eqref{eq:Pstar_joint_constrained} after replacing $M(x)$ with
				the sample covariance $S(x)$ at the current value for $x$.
		\item \textbf{BCD (diag)}: In line~\ref{line:covupdate} of
				Algorithm~\ref{alg:hybrid-BCD}, the noise
				information matrix is estimated using the ML estimate (as above), subject
				to both diagonal and eigenvalue constraints.
				This is equivalent to
				\eqref{eq:Pstar_both} after replacing $M(x)$ with
				the sample covariance $S(x)$ at the current value for $x$.
		\item \textbf{BCD (Wishart)}: 
				In line~\ref{line:covupdate} of
				Algorithm~\ref{alg:hybrid-BCD},
				the information
				matrix is updated according to \eqref{eq:Pstar_joint_constrained}; i.e., using the MAP estimate 
				with a Wishart prior and under the eigenvalue constraints.
		\item \textbf{BCD (diag+Wishart)}: 
				In line~\ref{line:covupdate} of
				Algorithm~\ref{alg:hybrid-BCD}, the noise information
				matrix is updated according to \eqref{eq:Pstar_both}; i.e., using the MAP estimate 
				with a Wishart prior and under the diagonal and eigenvalue constraints.
\end{enumerate}
For BCD (Wishart) and BCD (diag+Wishart) where a Wishart prior was used, we applied our
Algorithm~\ref{alg:mode} to set the prior parameters (i.e., $V$ and $\nu$),
for a prior weight of $w_\text{prior} = 0.1$ and a prior estimate of $\Sigma_0
= 0.002 I$ for both odometry and loop-closure edges. Note that this prior
estimate is far from the true noise covariance value.

Additionally, we report results for estimating $x$ using Powell's Dog-Leg
solver in g2o under two fixed noise covariance settings: (i) the true covariance
matrix (as a reference) and (ii) the identity matrix (a common ad hoc
approximation used in practice).  To ensure convergence across all trials, we
performed eight iterations for these methods. 
All algorithms were initialized using a spanning tree to compute $x_\text{init}$
\cite{kuemmerle2011g2o}.

\noindent\textbf{Metrics:}
We use RMSE to measure error in the estimated robot trajectory (positions).
In all cases, the first pose is fixed to the origin and therefore aligning the
estimates with the ground truth is not needed. We also use the 2-Wasserstein
distance \eqref{eq:W2} to measure the covariance estimation error.

\begin{figure}[ht]
		\centering
		\begin{subfigure}[t]{0.4\textwidth}
				\centering
				\includegraphics[width=\textwidth]{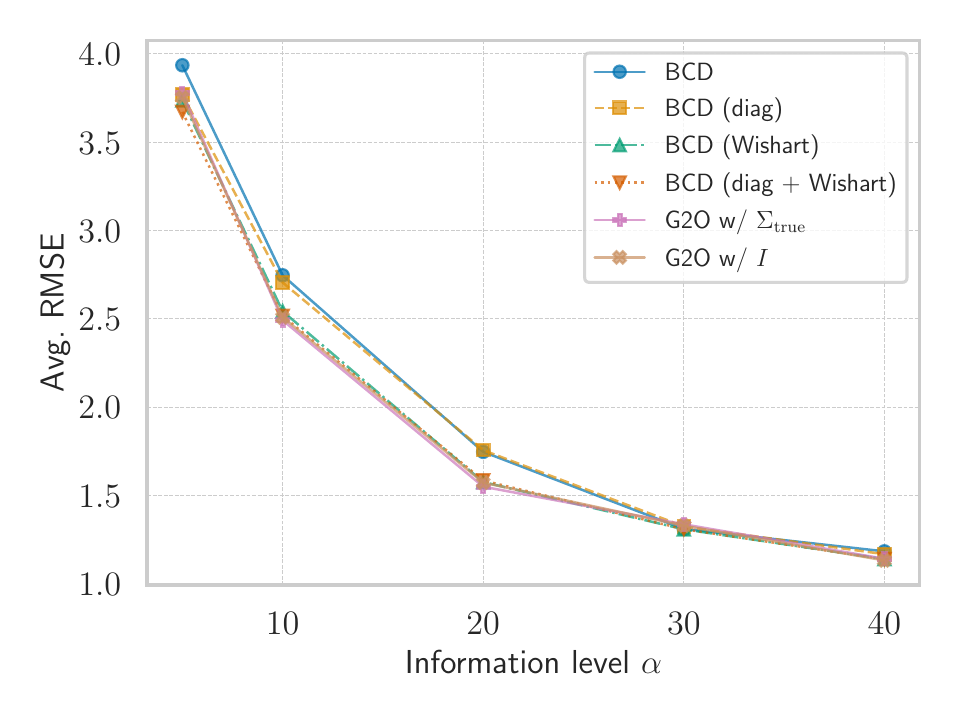}
				\caption{Homoscedastic Scenario}
				\label{fig:rmse1}
		\end{subfigure}
		\\
		\begin{subfigure}[t]{0.4\textwidth}
				\centering
				\includegraphics[width=\textwidth]{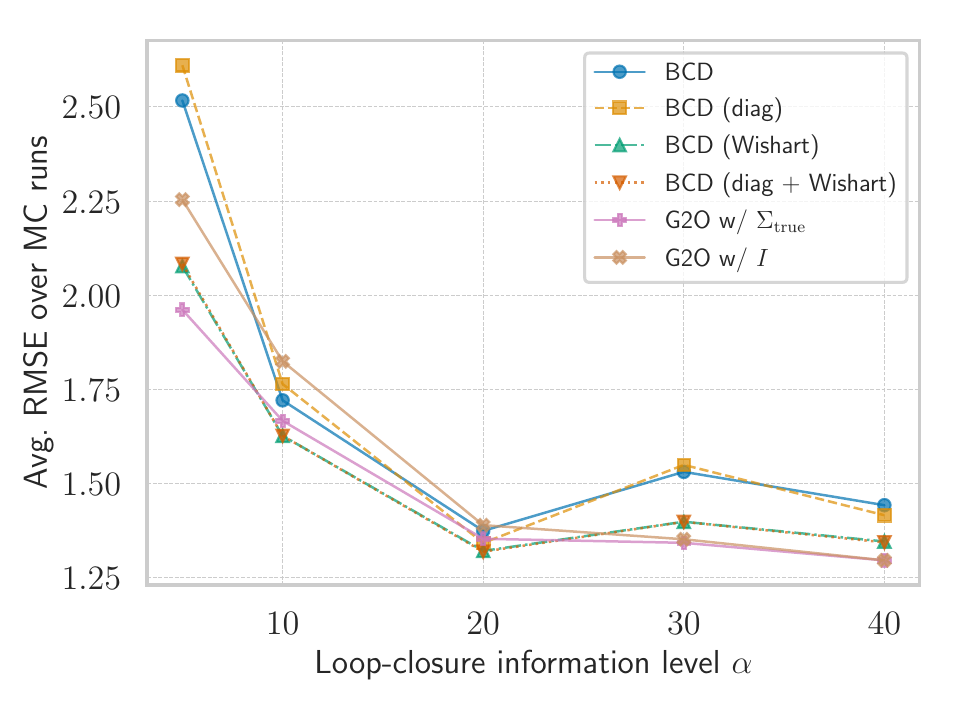}
				\caption{Heteroscedastic Scenario}
				\label{fig:rmse2}
		\end{subfigure}
		\\
		\begin{subfigure}[t]{0.4\textwidth}
				\centering
				\includegraphics[width=\textwidth]{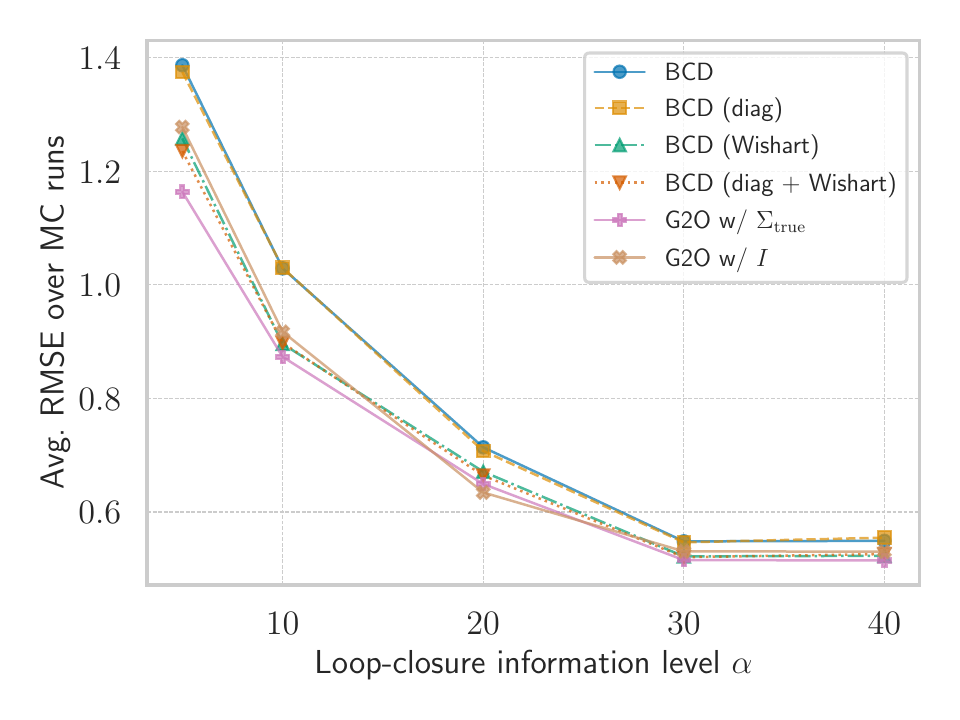}
				\caption{Heteroscedastic Scenario with extra loop closures}
				\label{fig:rmse2+}
		\end{subfigure}
		\caption{Average RMSE obtained by variants of BCD and g2o (with fixed true
		and identity covariances) as a function of information level $\alpha$.}
		\label{fig:rmses}
\end{figure}

\begin{figure*}
		\begin{subfigure}[t]{0.4\textwidth}
				\centering
				\includegraphics[height=5.5cm]{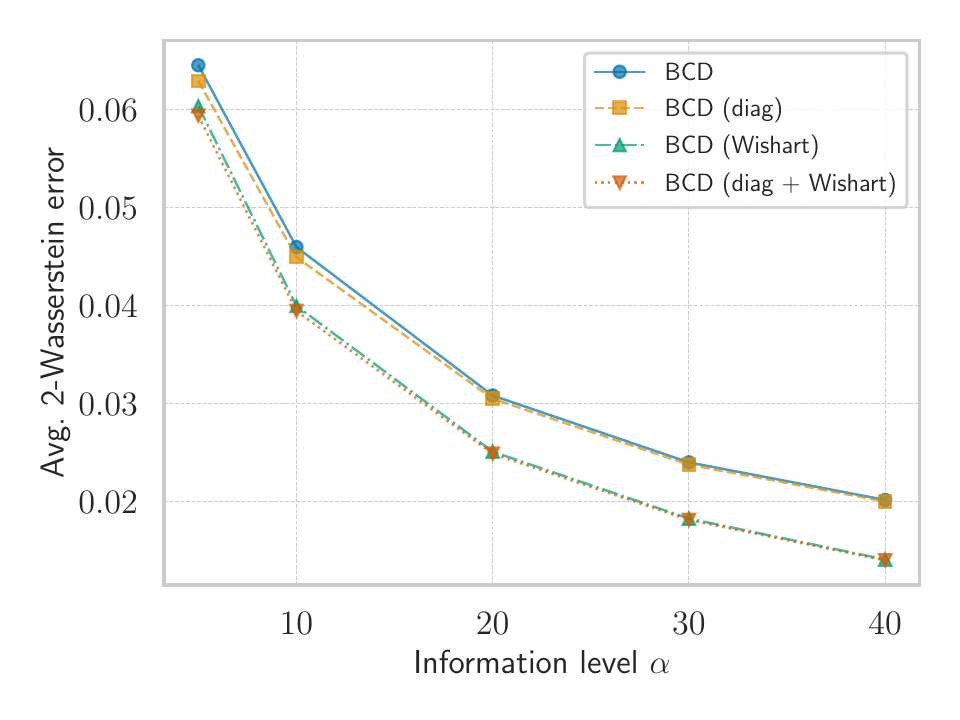}
				\caption{Homoscedastic Scenario}
				\label{fig:w21}
		\end{subfigure}
		\begin{subfigure}[t]{0.65\textwidth}
				\includegraphics[height=5.5cm]{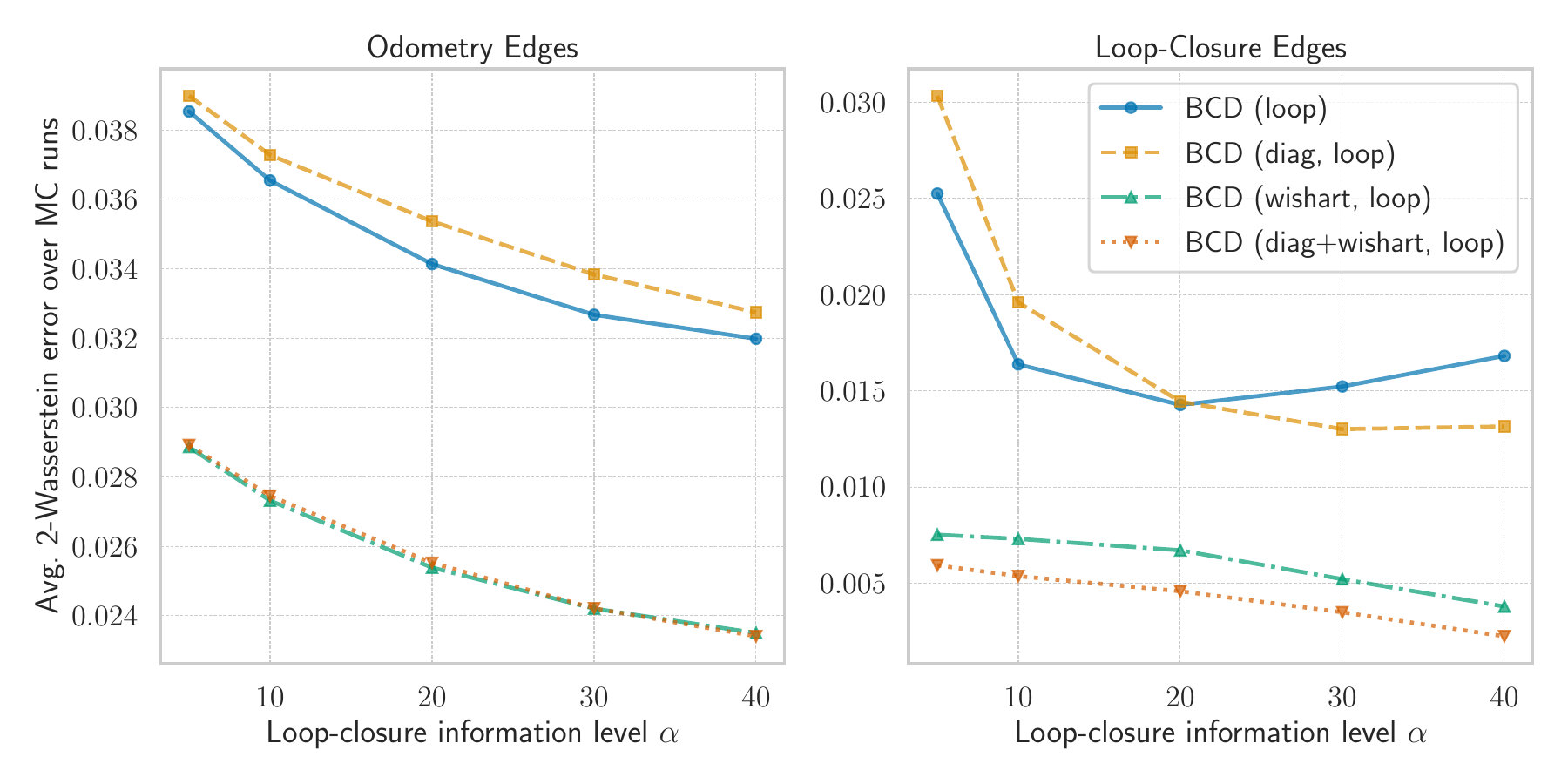}
				\caption{Heteroscedastic Scenario -- odometry (middle)
				and loop closure (right)}
				\label{fig:w22}
		\end{subfigure}
		\caption{Average 2-Wasserstein error achieved by variants of BCD and as
		a function of information level $\alpha$.}
		\label{fig:w2s}
\end{figure*}

\noindent\textbf{Results:}
The results are shown in Figures~\ref{fig:rmses} and \ref{fig:w2s}.

Figure~\ref{fig:rmses} shows the average RMSE over Monte Carlo trials for
different values of the information level \( \alpha \). 
Figure~\ref{fig:rmse1} presents the results for the homoscedastic case,
while Figures~\ref{fig:rmse2} and \ref{fig:rmse2+} show the results for the
heteroscedastic case before and after additional loop closures, respectively.
The results show that our framework, across all variants, succesfully produced solutions
with an RMSE close to the reference setting where the true noise covariance matrix
was available (g2o with \( \Sigma_{\text{true}} \)).
For almost all values of $\alpha$ and all experiments, BCD (Wishart) and BCD (diag+Wishart) achieved a lower average RMSE
compared to the other variants, highlighting the importance of incorporating a
prior. 
Notably, this is despite the prior being assigned a small weight \(
w_{\text{prior}} \) and the fact that the prior estimate \( \Sigma_0 \) is
not close to \( \Sigma_{\text{true}} \).  Incorporating a prior is particularly
crucial in PGO, as we observed that the eigenvalues of \( S(x) \) can become
severely small in practice.  In such cases, without a prior and without
enforcing a minimum eigenvalue constraint \( \Sigma \succeq \lambda_{\text{min}}
I \), the ML estimate for the noise covariance can collapse to zero, leading to
invalid results and excessive overconfidence.  The average RMSE achieved by
those variants with diagonal and without constraints are generally similar.
Interestingly, the RMSE achieved by the diagonal variants appears to be close to
that of their counterparts without this constraint, despite the true noise
covariance being diagonal. 

For some values of $\alpha$ in
Figure~\ref{fig:rmse2}, the BCD variants with the Wishart prior achieved a lower
MSE than the reference solution. However, we expect that, on average, the
reference solution will perform better with a larger number of Monte Carlo
simulations. The RMSE trends look similar in all experiments, with the exception
of a slight increase for $\alpha = 30$ for BCD and BCD (diag) in
Figure~\ref{fig:rmse2}.
In terms of RMSE, BCD variants with a Wishart
prior outperform or perform comparably to the baseline with a fixed identity
covariance in most cases. That said, especially under larger information levels
(i.e., lower noise),
this na\"{i}ve baseline estimates $x$ quite accurately.

Figure~\ref{fig:w2s} presents the average 2-Wasserstein distance \eqref{eq:W2}
between the noise covariance matrices estimated by various BCD variants and the
true covariance in both homoscedastic and heteroscedastic scenarios. 
In all cases, BCD variants achieved a small 2-Wasserstein error. For reference,
the 2-Wasserstein error between $\Ncal(0,\Sigtrue)$ and the baseline
$\Ncal(0,I)$ exceeds 1, which is more than 20 times the errors attained by our
algorithms. As noted in Section~\ref{sec:introduction}, an incorrect noise
covariance leads to severe overconfidence or underconfidence in the estimated
state $x$ (e.g., the estimated trajectory in SLAM), potentially resulting in
poor or catastrophic decisions.

The results indicate that, in most cases, diagonal BCD variants yield lower
errors. This is expected, as the true noise covariance matrices are diagonal,
and enforcing this prior information enhances estimation accuracy. Additionally,
the MAP estimates obtained by BCD (Wishart) and BCD (diag+Wishart) outperform
their ML-based counterparts. This is due to the fact that the
eigenvalues of the sample covariance matrix $S(x)$ can be very small, indicating
insufficient information for accurate noise covariance estimation. In such
cases, the eigenvalue constraint effectively prevents a complete collapse of the
estimated covariance. 
Interestingly, this issue was not encountered in \cite{zhan2025generalized} for ML
estimation of the noise covariance matrix in P$n$P. This discrepancy may suggest
additional challenges in estimating noise covariance in SLAM (and related)
problems, which typically involve a larger number of variables and sparse,
graph-structured relative measurements. In such cases, incorporating a prior
estimate may be necessary for accurate identification of noise covariance
matrices. 
Finally, Figure~\ref{fig:w22} illustrates that the MAP estimation error is
lower for the noise covariance of loop closures compared to odometry edges. This
difference is partly because the prior value based on $\Sigma_0$ is closer to
the true noise covariance matrix for loop closures.

\subsection{RIM Dataset}
\begin{figure*}[h]
		\centering
		\begin{subfigure}[t]{0.3\textwidth}
				\centering
				\includegraphics[width=\textwidth]{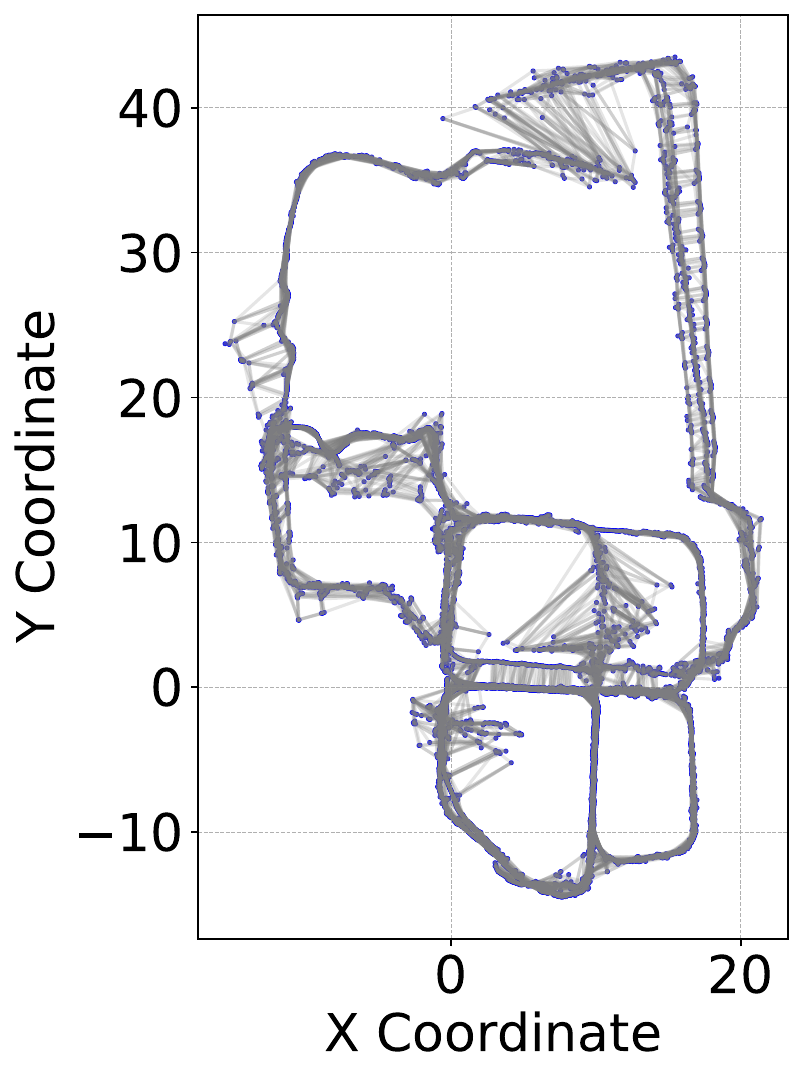}
				\caption{g2o w/ original covariance}
				\label{fig:rimg2o}
		\end{subfigure}
		\hfill
		\begin{subfigure}[t]{0.3\textwidth}
				\includegraphics[width=\textwidth]{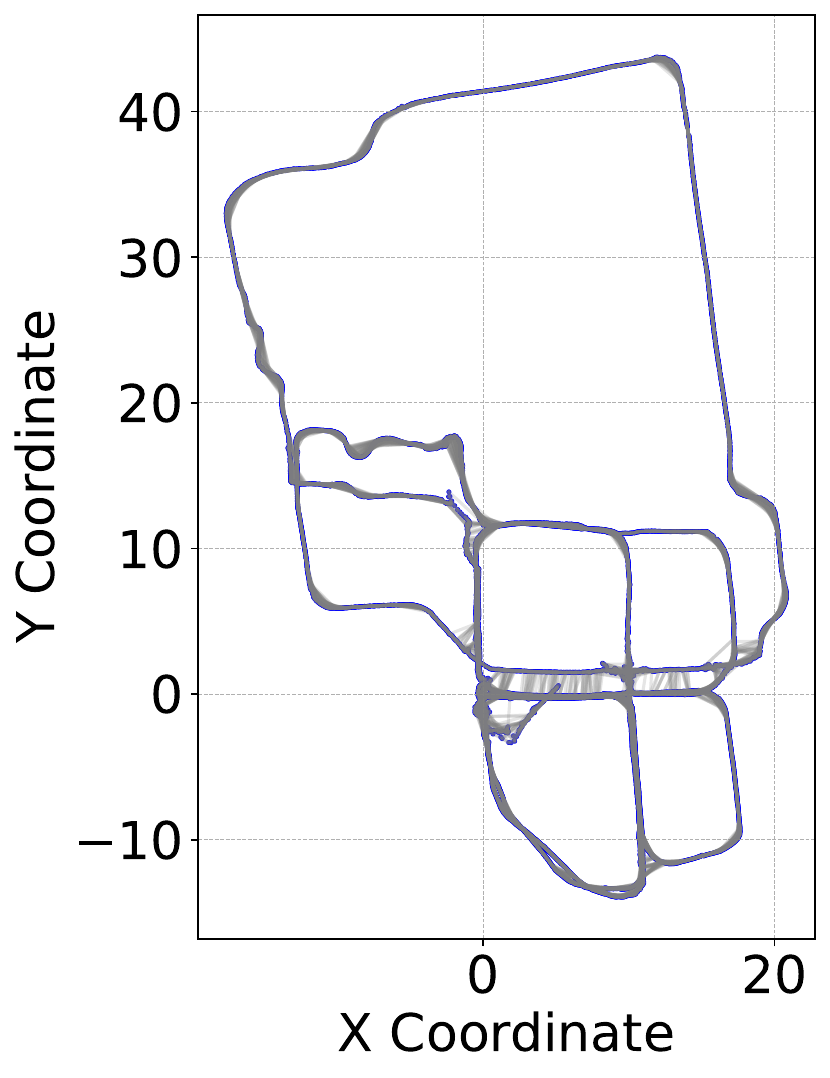}
				\caption{BCD -- joint ML estimate (w/o prior)}
				\label{fig:rimml}
		\end{subfigure}
		\hfill
		\begin{subfigure}[t]{0.3\textwidth}
				\includegraphics[width=\textwidth]{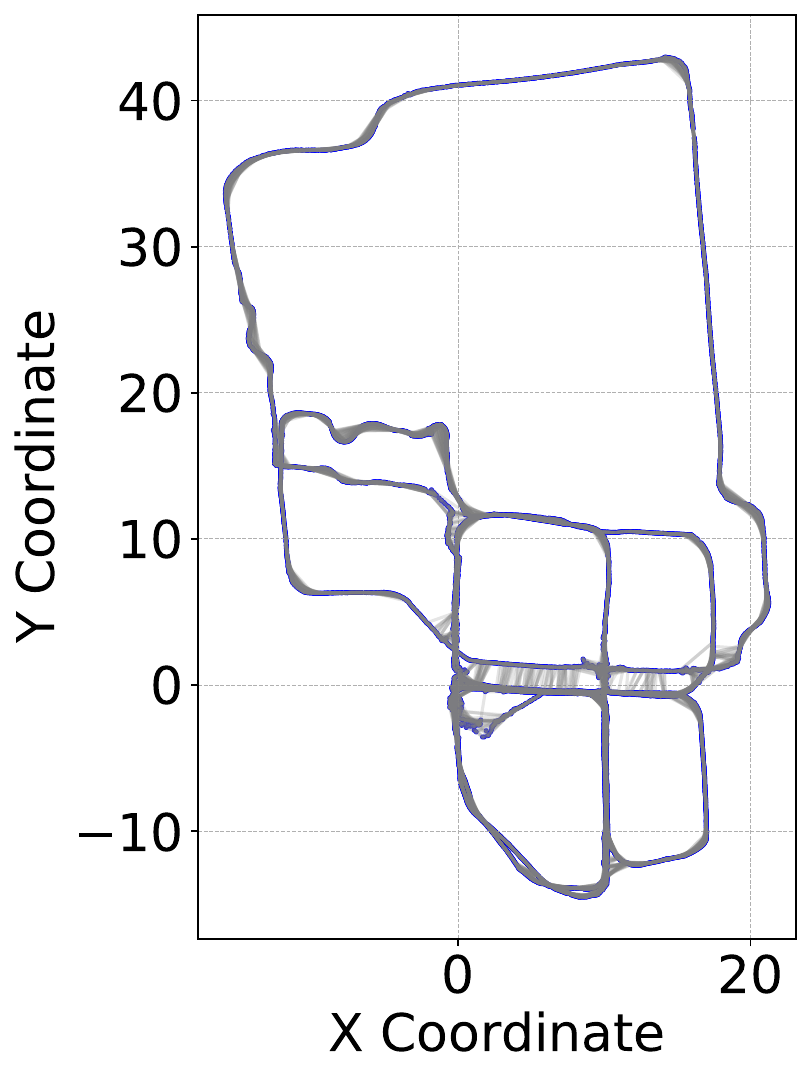}
				\caption{BCD -- joint MAP estimate w/ prior}
				\label{fig:rimmap}
		\end{subfigure}
		\caption{RIM Dataset}
		\label{fig:rim}
\end{figure*}
In this section, we present qualitative results on RIM, a large real-world 3D
PGO dataset collected at Georgia Tech \cite{carlone2015initialization}. This
dataset consists of $10{,}195$ poses and $29{,}743$ measurements. The g2o dataset
includes a default noise covariance matrix, but with these default values, g2o
struggles to solve the problem. Specifically, the Gauss-Newton and Dog-Leg
solvers terminate after a single iteration, while the Levenberg-Marquardt solver
completes 10 iterations but produces the trajectory estimate shown in
Figure~\ref{fig:rimg2o}. 

We ran BCD without and with the Wishart prior (i.e., ML and MAP estimation) for
10 outer iterations, using a single Dog-Leg iteration to optimize $x$ in each
round. We set the prior parameter based on $w_\text{prior} = 0.1$ and $\Sigma_0
= 0.01 I$. The results are displayed in Figures~\ref{fig:rimml} and
\ref{fig:rimmap}, respectively.
It is evident that the trajectory estimates obtained by BCD are significantly
more accurate than those obtained using the original noise covariance values. 

A closer visual inspection reveals that the trajectory estimated using the
Wishart prior (right) appears more precise than the ML estimate (middle). This
is expected, as without imposing a prior, some eigenvalues of $S(x)$ collapsed
until they reached the eigenvalue constraint $\Sigma \succeq \lambda_\text{min}
I$. In contrast, in the MAP case, the estimated covariance did not reach this
lower bound. 

Interestingly, despite $\Sigma_0$ being isotropic, the estimate obtained by BCD
with the Wishart prior was not. In particular, the information matrix component
associated with the $z$ coordinate was almost twice as large as those of $x$ and
$y$. This likely reflects the fact that the trajectory in this dataset is
relatively flat. This finding highlights that in this case, in addition to the
prior, the measurements also contributed information to the estimation of the
noise covariance matrix.

\section{Limitations}
\label{sec:limit}
Algorithm~\ref{alg:hybrid-BCD}, in its current form, is not robust to outliers, which limits its
applicability in real-world scenarios. However, existing robust M-estimation
techniques can be readily adapted for use with BCD. These methods iteratively
reweight residuals based on a chosen robust cost function. Our preliminary
results indicate that incorporating these weights into Step~1 of BCD (as is
standard) and also into Step~2 (by replacing the sample covariance matrix $S(x)$
with a weighted average of residuals) yields an outlier-robust variant of BCD
for the joint estimation problem. We are currently evaluating this approach and
aim to implement it in real-time LiDAR and visual SLAM and odometry systems such
as \cite{ORBSLAM3TRO,liosam2020shan}.

Additionally, Theorem~\ref{thm:hybrid-BCD} requires $\Mcal$ to be
compact. This result therefore does not immediately apply to problems where $x$
contains translational components. However, we believe we can address this
issue in SLAM and many other geometric estimation problems 
by confining translational components to potentially large but bounded subsets
of $\Rset^{d}$ (where $d \in \{2,3\}$ is the problem dimension).

Finally, we acknowledge the extensive research on IGLS and Feasible GLS
(FGLS) conducted as early
as 1970s in econometrics and statistics, which remain
largely unknown in engineering; see relevant references in \cite[Chapter 12.5]{SeberWild200309} and
\cite{zhan2025generalized}.
Exploring this rich literature may result in new insights and methods 
that can further improve noise covariance estimation in SLAM and computer vision
applications. 

\section{Conclusion}
\label{sec:conlusion}
This work presented a novel and rigorous framework for \emph{joint}
estimation of primary parameters and noise covariance matrices in 
SLAM and related computer vision estimation problems.
We derived
analytical expressions for (conditionally) optimal noise covariance matrix (under ML
and MAP criteria) under various
structural constraints on the true covariance matrix. Building on these
solutions, we proposed two types of algorithms for finding the optimal estimates and
theoretically analyzed their convergence properties. Our results
and algorithms are quite general and can be readily applied to a broad range of
estimation problems across various engineering disciplines.
Our algorithms were validated through extensive experiments using linear measurement models
and PGO problems. 
The results show that the state and the noise covariance matrix can be
jointly estimated from the measurements (and, optionally, a prior
on the covariance) with negligible computational overhead relative to standard
solvers.

\bibliographystyle{plainnat}
\bibliography{bib/slam,bib/optimization,bib/slamLoopClosure}

\appendices

\section{Wishart Prior}
\label{app:wishart}
Consider $\Wcal(P; V,\nu)$ as a prior for the information matrix. 
Here we assume $\nu \geq m+1$ where $m$ is the dimension of covariance matrix
and $V \succ 0$.
The mode of this distribution is given by
\begin{equation}
		\argmax_{P \succeq 0} \quad \Wcal(P;V,\nu) = (\nu - m - 1)V.
\end{equation}
Let $\Sigma_0 \succ 0$ be our prior guess for the covariance matrix. We set the
value of the scale matrix $V$ by matching the mode with our prior estimate $\Sigma_0^{-1}$:
\begin{align}
		(\nu - m - 1) V = \Sigma_0^{-1} \Leftrightarrow V^{-1} = 
		(\nu -m -1) \Sigma_0
		\label{<+label+>}
\end{align}
Let $w_\text{prior}$ be the following:
\begin{equation}
		w_\text{prior} \triangleq \frac{\nu -m - 1}{k} \geq 0.
		\label{eq:prior_weight}
\end{equation}
Then we can rewrite $V^{-1}$ as
\begin{equation}
		V^{-1} = w_\text{prior} k \Sigma_0.
\end{equation}
In the unconstrained case \eqref{eq:first_statement}, the optimal (MAP) estimate
(conditioned on a fixed value of $x$) is given by:
\begin{align}
		\Sigma^\star(x) \triangleq {P^\star(x)}^{-1} & = M(x) \\
		& = \frac{V^{-1} + k S(x)}{k + \nu -m -1}  \\
		& = \frac{w_\text{prior} k \Sigma_0 + k S(x)}{(w_\text{prior} + 1)k} \\
		& = \frac{w_\text{prior}}{w_\text{prior}+1} \Sigma_0 + \frac{1}{w_\text{prior}+1} S(x).
		\label{<+label+>}
\end{align}
This shows that the (conditional) MAP estimator simply blends the sample covariance
matrix $S(x)$ and prior $\Sigma_0$ based on the \emph{prior weight} $w_\text{prior}$.
We directly set $w_\text{prior}$  (e.g., $w_\text{prior} = 0.1$ by default) based on our
confidence in the prior relative to the likelihood. Large values of
$w_\text{prior}$ will result in stronger priors. For the chosen value of $w_\text{prior}$,
\begin{align}
		\nu &= w_\text{prior} k + m + 1, \\
		V &= w_\text{prior} k \Sigma_0.
		\label{<+label+>}
\end{align}
This procedure is summarized in Algorithm~\ref{alg:mode}. 

\begin{remark}[Wishart Parameters]
		\label{rem:wishartParams}
		It is worth noting that Algorithm~\ref{alg:mode} represents just one simple
		approach to setting the prior parameters. Alternative strategies can also be
		employed, such as learning the parameters from an offline calibration dataset,
		or directly specifying the value of $\nu$ independently of $k$ (with larger
		values of $\nu$ indicating greater confidence in the prior estimate
$\Sigma_0$).
		By explicitly setting a constant value for $w_{\text{prior}}$,\footnote{That is, setting
		the prior parameter $\nu$ as a function of $k$ in \eqref{eq:prior_weight}.} the
		relative influence of the prior versus the measurements remains fixed. 
		We adopted this strategy in our experiments for simplicity.
		However,
		it is important to recognize that setting $\nu$ as a function of $k$ violates
		the Bayesian principle that the prior should be specified before observing any
		data. Consequently, when $w_\text{prior}$ is kept fixed, unlike in standard MAP estimation the measurements will
		never completely dominate the prior as $k$ increases. 
\end{remark}
\section{Proof of Proposition~\ref{prop:convexity}}
\label{app:prop}
		The objective function is strictly convex because log-determinant is
		strictly concave over
		$\mathbb{S}_{\succ 0}$ and the second term is linear in $P$. The
		equality (diagonal) and linear matrix inequality (eigenvalue) constraints are also
		linear and convex, respectively.
		Since the objective is strictly convex, there is at most one optimal
		solution (i.e., if a minimizer exists, it is unique).

\section{Proof of Theorem~\ref{thm:main}}\label{sec:proof_main}
In the following, we provide primal-dual pairs that satisfy the KKT
conditions for each variant. Since the problem is differentiable and convex,
these pairs must be primal-dual optimal. Moreover, these solutions are unique
because the objective function is strictly convex.

\paragraph*{1)}For a given $x \in \Mcal$, the Lagrangian of the inner subproblem in Problem~\ref{prob:joint} is given by
\begin{equation}
		\mathcal{L}(P,Q) = -\log\det P + \langle M(x),P\rangle - \langle Q , P
		\rangle,
\end{equation}
where $Q \succeq 0$ is the Lagrange multiplier corresponding to the semidefinite cone
constraint. 
Recall that $M(x) \succ 0$ by construction. Verify that 
$P^\star(x) = M(x)^{-1}$ and $Q = 0$ trivially satisfy the KKT conditions, and
thus $P^\star(x) = M(x)^{-1}$ is the optimal solution to the primal problem.

\paragraph*{2)}For the case of the inner subproblem in Problem~\ref{prob:joint_diagonal}, the
problem can be rewritten as
\begin{align}
		\underset{\lambda_1,\dots,\lambda_m}{\text{minimize}}
	&\qquad -\sum_{i=1}^m \log \lambda_i +  
	\sum_{i=1}^m \lambda_i M_{ii}(x)
					 \nonumber \\
				  \text{subject to}
				 & 
				 \qquad \lambda_i \geq 0, \quad i\in [m].
\end{align}
The Lagrangian of this problem is given by
\begin{equation}
		\mathcal{L}(\lambda_1,\dots,\lambda_m, q_1,\dots,q_m) = \sum_{i=1}^m \left [ -\log
		\lambda_i +  \lambda_i M_{ii}(x) - \lambda_i q_i\right], 
\end{equation}
where $q_i$ are the Lagrange multipliers corresponding to the constraints.
The KKT conditions for this problem are
\begin{align}
		&-1/\lambda_i +M_{ii}(x) - q_i = 0,\quad i\in [m]\\	
	& q_i\lambda_i = 0, \quad q_i \geq 0, \quad \lambda_i \geq 0, \quad
	i\in [m].
\end{align}
It can be observed that since $M_{ii}(x)$ is positive by the virtue of $M(x)$
being a positive definite matrix, $\lambda_i^\star = {M_{ii}(x)}^{-1}$ and
$q_i^\star=0$, $\forall i\in [m]$, is the solution to the above KKT system. 

\paragraph*{3)} The Lagrangian of the inner subproblem of
Problem~\ref{prob:joint_constrained} is given by

\begin{align}
	\mathcal{L}(P,\underline{Q},\overline{Q}) &= -\log\det P + \langle M(x),
	P\rangle\\&\quad +\langle \underline{Q}, I/\lambda_{\max} - P\rangle + \langle
	\overline{Q},P- I/\lambda_{\min}\rangle. 
\end{align}
The KKT conditions corresponding to this problem read
\begin{align}
& \nabla_{P} \mathcal{L}(P,\underline{Q},\overline{Q}) =-P^{-1} + M(x) -
	 \underline{Q} +\overline{Q} = 0 \\
&\langle \underline{Q}, I/\lambda_{\max} - P\rangle = 0\\
	&\langle	\overline{Q},P-I/\lambda_{\min}\rangle = 0\\
        &\underline{Q}\succ 0,\; \overline{Q} \succ
	0,\; P=P^\top\\
	&	I/\lambda_{\max} \leq P\leq  I/\lambda_{\min}.
\end{align}
Let $\underline{Q}^\star = U(x) \underline{\Lambda} (x) U(x)^{\top}$ and
$\overline{Q}^\star = U(x) \overline{\Lambda}(x) U(x)^{\top}$ where the
elements of diagonal matrices $\underline{\Lambda}(x)$ and $\overline{\Lambda}(x)$ are 
\begin{align}
	\underline{\Lambda}_{ii}(x) = 
	\begin{cases}
		0 & D_{ii}(x) \in [0, \lambda_\text{max}],
		\\
		D_{ii}(x) - \lambda_{\max} & D_{ii}(x) \in
		(\lambda_\text{max}, \infty ),\\
	\end{cases}
\end{align}
and
\begin{align}
		\overline{\Lambda}_{ii}(x) = 
	\begin{cases}
		\lambda_{\min}- D_{ii}(x) & D_{ii}(x) \in [0, \lambda_\text{min}],
								\\
		0 & D_{ii}(x) \in
		(\lambda_\text{min}, \infty ).\\
	\end{cases}
\end{align}
It can be  observed that the choice of $P^\star(x)$ as in
\eqref{eq:Pstar_joint_constrained} along with $\overline{Q}^\star(x)$ and
$\underline{Q}^\star(x)$ defined above satisfy the KKT conditions and are the
primal-dual optimal solutions to the problem. This
completes the proof.
\paragraph*{4)} The solution to the inner subproblem of
Problem~\ref{prob:joint_diagonal_constrained} can be obtained by a reformulation
similar to case 2) above and checking the conditions similar to case 3).

\section{Proof of Theorem~\ref{thm:singular}}\label{sec:proof_singular}
Consider a strictly feasible pair $(x_0, P_0)$ where $x_0 \in \Mcal$ and $P_0
\succ 0$. Let $u_0 \in \mathbb{R}^m$ be any vector in the nullspace of $S(x_0)$.
For any $c > 0$, the pair $(x_0, P_0 + c\, u_0 u_0^\top)$ remains strictly feasible. Observe that
\begin{align}
		F_\text{ML}(x_0&,P_0+c \, u_0 u_0^\top)  = \nonumber \\ & -\log\det\left(P_0+ c \, u_0
		u_0^\top\right) + \langle S(x_0),P_0+c \, u_0 u_0^\top \rangle \\
		& = -\log\det\left(P_0+ c \, u_0
		u_0^\top\right) + \langle S(x_0),P_0 \rangle \\
		& = - \log\left(1+c \, u_0^\top P_0^{-1} u_0\right) - \log\det P_0 +\langle S(x_0),P_0
		\rangle \label{eq:proof_matrixdeterminant}\\
		& = - \log\left(1+c \, u_0^\top P_0^{-1} u_0\right) + F_\text{ML}(x_0,P_0),
		\label{<+label+>}
\end{align}
where we used the matrix determinant lemma in
\eqref{eq:proof_matrixdeterminant}.
Since $P_0 \succ 0$, we have that $u_0^\top P_0^{-1} u_0 >0$. Hence, as $c \to \infty$, we have
\[F_\text{ML}(x_0,P_0+c\, u_0 u_0^\top) \to -\infty.\] Therefore, the problem is
unbounded below and does not admit a solution.

\section{Background Information for Section \ref{sec:BCD}}\label{sec:geo_back}
Let $\Ycal$ be a smooth submanifold in Euclidean space. Each $y \in
 \Ycal$ is associated with a
 linear subspace, called the \emph{tangent space}, denoted by $T_y \Ycal$.
 Informally, the tangent space contains all directions in which one can
 tangentially pass through $y$. For a formal definition see \cite[Definition
 3.5.1, p.~34]{absil2009optimization}. A smooth function
 $F : \Ycal \rightarrow \mathbb{R}$ is associated with the (Euclidean) gradient $\nabla
F(y)$ of $F$, as well as the orthogonal projection of $\nabla F (y)$ onto the tangent space
$T_y \Ycal$, known as the \emph{Riemannian gradient} of $F$ at $y$, denoted by
$\mathrm{grad} \, F (y)$. See \cite[p.~46]{absil2009optimization} for more
detail. The manifold $\Ycal$ is also associated
with a map $\mathrm{Retr}_y : T_y \Ycal \rightarrow \Ycal$, called
a \emph{retraction} that enables moving from $y$ along the direction $v\in T_y
\Ycal$ while remaining on the manifold -- see \cite[Definition
3.47]{boumal2020intromanifolds} and \cite[Definition
4.1.1]{absil2009optimization} for formal definitions.

A sufficient condition for Assumption~\ref{ass:blip} to hold is for the manifolds to be
compact subsets of the Euclidean space. Under this assumption we have the
following lemma \cite[Lemma 1]{peng2023block}.
\begin{lemma}\label{lem:constants}
Let $\mathcal{M}$ be a compact submanifold of the Euclidean space. Then we have,
\[ \|\mathrm{Retr}_{\xi} (v) - \xi\|\leq \alpha \|v\|, \quad \forall
\xi\in\mathcal{M},\quad \forall v \in T_{\xi}{\mathcal{M}} \]
for some constant $\alpha$. Moreover, if $R$ satisfies Assumption \eqref{ass:blip}
and $\mathcal{P}$ is compact, then there exists a positive $\widetilde{L}$ such that
for all $(\xi,\Pi)\in\mathcal{M}\times\mathcal{P}$ and all $v\in
T_{\xi}{\mathcal{M}}$, we have
\begin{align}
		R (\mathrm{Retr}_{\xi}(v),\Pi) &\leq R(\xi,\Pi) + \langle \mathrm{grad}_x \, R (\xi,\Pi),v\rangle +
	\dfrac{\widetilde{L}}{2}\|v\|.
\end{align}
\end{lemma}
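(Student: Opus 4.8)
The plan is to establish the two displayed inequalities separately; both follow from the smoothness of the retraction together with compactness of the sets involved, and the statement coincides with \cite[Lemma~1]{peng2023block}. I sketch the argument below.

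\emph{First inequality.} I would use the defining properties of a smooth retraction: $\mathrm{Retr}_\xi$ is continuously differentiable, $\mathrm{Retr}_\xi(0_\xi)=\xi$, and $D\mathrm{Retr}_\xi(0_\xi)=\mathrm{id}_{T_\xi\mathcal{M}}$. For $v\in T_\xi\mathcal{M}$ with $\|v\|\le 1$, writing $\mathrm{Retr}_\xi(v)-\xi=\int_0^1 D\mathrm{Retr}_\xi(tv)[v]\,dt$ and bounding the integrand gives $\|\mathrm{Retr}_\xi(v)-\xi\|\le\alpha_1\|v\|$, where $\alpha_1:=\max\{\|D\mathrm{Retr}_\xi(w)\|:\xi\in\mathcal{M},\ \|w\|\le 1\}$ is finite because $D\mathrm{Retr}$ is continuous and the closed unit-ball subbundle of $T\mathcal{M}$ is compact ($\mathcal{M}$ being compact). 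For $\|v\|>1$, since $\mathrm{Retr}_\xi(v)\in\mathcal{M}$ we have $\|\mathrm{Retr}_\xi(v)-\xi\|\le\mathrm{diam}(\mathcal{M})\le\mathrm{diam}(\mathcal{M})\,\|v\|$. Taking $\alpha:=\max\{\alpha_1,\mathrm{diam}(\mathcal{M})\}$ proves the claim.

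\emph{Second inequality.} Fix $(\xi,\Pi)\in\mathcal{M}\times\mathcal{P}$ and $v\in T_\xi\mathcal{M}$, and set $g(t):=R(\mathrm{Retr}_\xi(tv),\Pi)$ for $t\in[0,1]$. Then $g$ is continuously differentiable with $g'(t)=\langle\nabla_x R(\mathrm{Retr}_\xi(tv),\Pi),\,D\mathrm{Retr}_\xi(tv)[v]\rangle$, and $g'(0)=\langle\nabla_x R(\xi,\Pi),v\rangle=\langle\mathrm{grad}_x R(\xi,\Pi),v\rangle$ because $D\mathrm{Retr}_\xi(0)=\mathrm{id}$ and the normal component of $\nabla_x R$ is orthogonal to $v\in T_\xi\mathcal{M}$. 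By the fundamental theorem of calculus,
\[
R(\mathrm{Retr}_\xi(v),\Pi)-R(\xi,\Pi)-\langle\mathrm{grad}_x R(\xi,\Pi),v\rangle=\int_0^1\big(g'(t)-g'(0)\big)\,dt,
\]
so it remains to bound $|g'(t)-g'(0)|$. Splitting $g'(t)-g'(0)$ into $\langle\nabla_x R(\mathrm{Retr}_\xi(tv),\Pi)-\nabla_x R(\xi,\Pi),\,D\mathrm{Retr}_\xi(tv)[v]\rangle$ plus $\langle\nabla_x R(\xi,\Pi),\,(D\mathrm{Retr}_\xi(tv)-D\mathrm{Retr}_\xi(0))[v]\rangle$, Assumption~\ref{ass:blip} bounds $\|\nabla_x R(\mathrm{Retr}_\xi(tv),\Pi)-\nabla_x R(\xi,\Pi)\|\le L\|\mathrm{Retr}_\xi(tv)-\xi\|\le L\alpha\,t\|v\|$ via the first inequality, while $\|D\mathrm{Retr}_\xi(tv)[v]\|$, $\|\nabla_x R(\xi,\Pi)\|$, and $\|D\mathrm{Retr}_\xi(tv)-D\mathrm{Retr}_\xi(0)\|\le C\,t\|v\|$ are controlled by constants depending only on $\mathcal{M}$, $\mathcal{P}$, and a fixed radius $\rho\ge\|v\|$, using smoothness of $\mathrm{Retr}$, continuity of $\nabla_x R$, and compactness. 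Hence $|g'(t)-g'(0)|=O(t\|v\|^2)$ uniformly over $\{\|v\|\le\rho\}$, and integrating over $t\in[0,1]$ gives $R(\mathrm{Retr}_\xi(v),\Pi)\le R(\xi,\Pi)+\langle\mathrm{grad}_x R(\xi,\Pi),v\rangle+\tfrac{\widetilde L}{2}\|v\|^2$ for $\|v\|\le\rho$. This quadratic bound on a ball is exactly what the proof of Theorem~\ref{thm:hybrid-BCD} needs (there the step size $\eta=1/\widetilde L$ keeps $\|v\|=\eta\|\mathrm{grad}_x R\|$ uniformly bounded); on $\{\|v\|\le\rho\}$ it also implies the displayed bound with $\|v\|$ after enlarging the constant, and for $\|v\|>\rho$ the displayed bound is immediate because $R(\mathrm{Retr}_\xi(v),\Pi)\le\max_{\mathcal{M}\times\mathcal{P}}R$ while its right-hand side grows at least linearly in $\|v\|$ once $\widetilde L$ exceeds twice a uniform bound on $\|\mathrm{grad}_x R\|$.

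The step I expect to be the main obstacle is the second inequality, specifically making the retraction-derived constants \emph{uniform} over the manifold rather than merely pointwise in $\xi$. This is exactly where compactness is used — of the ball subbundle of $T\mathcal{M}$ for the first- and second-derivative bounds on $\mathrm{Retr}$, and of $\mathcal{M}\times\mathcal{P}$ for the bounds on $\nabla_x R$ and $\mathrm{grad}_x R$ — and once these uniform constants are available the rest is routine bookkeeping.
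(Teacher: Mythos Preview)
Your proposal is correct and, in fact, goes beyond what the paper does: the paper simply states this lemma as a citation to \cite[Lemma~1]{peng2023block} without proof, and you correctly identify that reference at the outset. The sketch you provide is sound and is essentially the standard argument behind that cited lemma.

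One remark worth making: the displayed inequality in the paper has $\tfrac{\widetilde L}{2}\|v\|$, but the result actually used downstream (in the proof of Theorem~\ref{thm:hybrid-BCD}, with step size $\eta=1/\widetilde L$) is the quadratic version $\tfrac{\widetilde L}{2}\|v\|^2$, which is also what appears in \cite[Lemma~1]{peng2023block}. You handle this gracefully by first proving the quadratic bound on a bounded ball and then recovering the linear-in-$\|v\|$ statement via compactness; your observation that only the quadratic version on a ball of radius $\rho\ge\eta\max\|\mathrm{grad}_x R\|$ is needed for Theorem~\ref{thm:hybrid-BCD} is exactly right. The patch for $\|v\|>\rho$ works once you note that $\widetilde L$ can be taken at least $2G+2(M-m)/\rho$, where $G=\max\|\mathrm{grad}_x R\|$ and $M-m$ is the oscillation of $R$ on $\mathcal{M}\times\mathcal{P}$, so that the right-hand side dominates $M$ for all $\|v\|>\rho$.
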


\section{Proof of Theorem~\ref{thm:conv1}}
\label{app:convergence_2nd}
The proof follows from specializing the proof of \cite[Theorem
4]{peng2023block} to the problem considered in this paper. Specifically,
the compactness of manifolds along with
Assumptions~\ref{ass:basic}  and \ref{ass:blip} lead to the existence of
of $\widetilde{L}$ as described in Lemma \ref{lem:constants}. Then following
the steps of the proof of \cite[Theorem 4]{peng2023block} and setting
$b=2$ completes the proof.

\end{document}